
\documentclass[nohyperref]{article}

\usepackage{microtype}
\usepackage{graphicx}
\usepackage{subcaption, caption}
\usepackage{booktabs} 
\usepackage{nicefrac}
\usepackage{lipsum}

\usepackage{hyperref}


\usepackage[accepted]{icml2022}


\usepackage{amsmath}
\usepackage{amssymb}
\usepackage{mathtools}
\usepackage{amsthm}
\usepackage{siunitx}

\usepackage[capitalize,noabbrev]{cleveref}

\theoremstyle{plain}
\newtheorem{theorem}{Theorem}[section]

\newtheorem{lemma}[theorem]{Lemma}

\theoremstyle{definition}

\newtheorem{assumption}[theorem]{Assumption}
\theoremstyle{remark}

\usepackage{bm}

\def\Ee{\mathbb{E}}
\def\Oe{\mathbb{O}}
\def\Re{\mathbb{R}}
\def\Se{\mathbb{S}}
\def\nN{\mathcal{N}}
\def\oO{\mathcal{O}}
\def\II{\mathbf{I}}
\def\QQ{\mathbf{Q}}
\def\UU{\mathbf{U}}
\def\VV{\mathbf{V}}

\def\trans{^{\intercal}}
\def\th{^{\textnormal{th}}}
\DeclareMathOperator{\var}{Var}
\DeclareMathOperator{\trace}{tr}
\DeclareMathOperator{\diag}{diag}
\newcommand{\cmnt}[1]{}

\newcommand\mypara[1]{\vspace{0mm}\noindent\textbf{#1}}

\usepackage[textsize=tiny]{todonotes}

\allowdisplaybreaks

\icmltitlerunning{Generalizing Gaussian Smoothing for Random Search}

\begin{document}

\twocolumn[
\icmltitle{Generalizing Gaussian Smoothing for Random Search}



\icmlsetsymbol{equal}{*}

\begin{icmlauthorlist}
\icmlauthor{Katelyn Gao}{yyy}
\icmlauthor{Ozan Sener}{zzz}
\end{icmlauthorlist}

\icmlaffiliation{yyy}{Intel Labs, Santa Clara, CA, USA}
\icmlaffiliation{zzz}{Intel Labs, Munich, Germany}

\icmlcorrespondingauthor{Katelyn Gao}{katelyn.gao@intel.com}

\icmlkeywords{random search, gradient estimation, Gaussian smoothing}

\vskip 0.3in
]



\printAffiliationsAndNotice{}  

\begin{abstract}
Gaussian smoothing (GS) is a derivative-free optimization (DFO) algorithm that estimates the gradient of an objective using perturbations of the current parameters sampled from a standard normal distribution.
We generalize it to sampling perturbations from a larger family of distributions.
Based on an analysis of DFO for non-convex functions, we propose to choose a distribution for perturbations that minimizes the mean squared error (MSE) of the gradient estimate. 
We derive three such distributions with provably smaller MSE than Gaussian smoothing. 
We conduct evaluations of the three sampling distributions on linear regression, reinforcement learning, and DFO benchmarks in order to validate our claims. 
Our proposal improves on GS with the same computational complexity, and are usually competitive with and often outperform Guided ES \citep{guidedes} and Orthogonal ES \citep{choromanski2018structured}, two computationally more expensive algorithms that adapt the covariance matrix of normally distributed perturbations.
\end{abstract}

\section{Introduction}\label{sec:intro}
In many practical applications, machine learning is complicated by the lack of analytical gradients of the objective with respect to the parameters of the predictor.
For example, a search and rescue robot could have complex mechanics that may be impossible to accurately model even with full knowledge of the terrain, and without a model of the system dynamics, the analytical gradients of the success rate with respect to the policy parameters would not be available.
On the other hand, noisy evaluations of the objective, such as Booleans indicating success, are inexpensive to obtain. 
The problem of optimizing a function with only zeroth-order evaluations is called derivative-free optimization (DFO).

Gaussian smoothing (GS) \cite{matyas1965random, nesterov2017random} is a DFO algorithm that estimates the gradient using evaluations at perturbations of the parameters, randomly sampled from the standard normal distribution and computing finite differences.
Current extensions of GS add post-processing.
\citet{polyak} and \citet{flaxman2005online} normalize the perturbations; Orthogonal ES \cite{choromanski2018structured} orthogonalizes them.
Guided ES \cite{guidedes} rotates the perturbations to be better aligned with recent gradient estimates; LMRS \cite{lmrs} rotates them to a learned subspace.
The last three approaches increase the computational complexity as they require the Gram-Schmidt process.

Although Gaussian smoothing is shown to be effective, the choice of standard normal distribution is rather arbitrary. In this paper, we generalize GS to sample perturbations from arbitrary distributions. 
We can choose distributions that optimize any desired property thanks to the proposed generalization. 
Specifically, we show that a convergence bound for stochastic gradient descent for smooth non-convex functions is proportional to the mean squared error (MSE) of the gradients.
Therefore, we select a distribution with reduced MSE of the gradient estimate, computing the MSE under the assumption that the entries of the perturbations are IID with mean zero.

Our first algorithm to reduce the MSE is \emph{Bernoulli Smoothing} (BeS), which replaces the standard normal distribution with a standardized Bernoulli distribution with probability $0.5$. After fixing the distributional family of perturbations to be Gaussian or Bernoulli, we obtain distributions that approximately minimize the MSE.
The distributions have simple analytical forms and do not require any information about the objective.
In fact, they are scaled versions of GS and BeS, with smaller variance, and so we call the resulting algorithms \emph{GS-shrinkage} and \emph{BeS-shrinkage}.
Due to the IID assumption of the entries of the perturbations, BeS, GS-shrinkage, and BeS-shrinkage have the same computational complexity as GS. 

To validate the theory, we empirically evaluate our proposed methods for DFO on linear regression since the analytical gradients are available to compute various statistics.
Results confirm that GS-shrinkage and BeS-shrinkage lead to gradient estimates with smaller MSE.
We further conduct an empirical evaluation on high-dimensional reinforcement learning (RL) benchmarks, based on locomotion or manipulation, with various budgets for trajectory simulation in the environment and linear policies.
Generally, BeS learns a superior policy to GS. 
For the locomotion environments, GS-shrinkage and BeS-shrinkage usually outperform BeS; which one is better depends on the environment and the budget.
Lastly, we evaluate on noisy DFO benchmarks, and observe that when the number of perturbations sampled at each iteration is smaller than the problem dimension, BeS outperforms GS at the start of optimization.
However, BeS and GS outperform GS-shrinkage and BeS-shrinkage.
Overall, our algorithms are computationally more efficient and usually competitive with and often outperform Guided ES and Orthogonal ES. 
These conclusions remain when a neural network replaces the linear policy in RL.

\section{Related work}\label{sec:related}
Derivative-free optimization is a research field that includes Bayesian optimization, genetic algorithms, and random search; see \citet{dfobook} and \citet{dfosurvey} for surveys.
We review in more detail literature most related to our work, Gaussian smoothing and evolutionary strategies.

\mypara{Gaussian smoothing}
GS \cite{matyas1965random, nesterov2017random} is a random search algorithm that estimates the gradient of an objective using its values at random perturbations of the parameters sampled from the standard normal distribution.
Many variants exist.
\citet{polyak} and \citet{flaxman2005online} normalize the perturbations, obtaining samples from the uniform distribution on the unit sphere instead of the standard normal.
More recently, methods have been proposed to improve GS by modifying the distribution of the perturbations, at the expense of greater computational complexity, which we have included as baselines in the experiments.
\citet{choromanski2018structured} orthogonalizes the perturbations, by either the Gram-Schmidt process or constructing random Hadamard-Rademacher matrices, which decreases the MSE of the gradient estimate.
\citet{guidedes} changes the covariance matrix of the perturbations to be aligned with the subspace spanned by recent gradient estimates, again requiring the Gram-Schmidt process, and \citet{lmrs} proposes a similar algorithm for the special case where the objective lies on a learned low-rank manifold.

Our methods do not increase the computational complexity compared to GS, as we still assume that the entries of the perturbations are IID.
We explicitly minimize the MSE of the gradient estimate with respect to the distribution of the perturbations.
\citet{stars} adopts a similar strategy to learn the optimal spacing for the finite difference in GS; unlike ours, their algorithms depend on characteristics of the objective.

\mypara{Evolutionary strategies}
Evolutionary strategies (ES), a class of genetic algorithm, mathematically looks similar to Gaussian smoothing but is orthogonally motivated.
ES minimizes the expected objective of a distribution over the parameter space, which is equivalent to minimizing the objective if the distribution is allowed to degenerate to a delta distribution.
More concretely, if the distribution were Gaussian optimization would be done with respect to both the mean and variance.
On the other hand, Gaussian smoothing and its relatives optimize only the mean; the variance is utilized purely to estimate the gradient.
Due to the difference in the algorithmic structure, we decided not to include ES algorithms as baselines in the experiments.
Popular ES algorithms are CMA-ES \cite{cmaes}, where the distribution is anisotrophic Gaussian, and NES \cite{wierstra2014natural}, which performs natural gradient descent for arbitrary distributions.

\mypara{GS for policy search}
Several of the previous works evaluate on reinforcement learning benchmarks.
\citet{salimans2017evolution} applies GS to MuJoCo locomotion and Atari environments with MLP policies, showing performance competitive with policy gradient algorithms.
However, it requires objective shaping, which \citet{choromanski2018structured} and subsequent works were able to remove.
For linear policies, ARS \cite{mania2018simple} showed that the MuJoCo lomotion benchmarks can be solved by GS after adding observation and reward standardization; \citet{lmrs} substantially speeds up learning on the more difficult environments.
We remark that the ARS gradient estimator is mathematically similar to GS-shrinkage.
However, it treats the variance of the perturbation distribution as a hyperparameter to be tuned through grid search, and so we do not include it as a baseline.
In contrast, we propose to set it to a value that approximately minimizes the MSE of the gradient estimate, without any knowledge of the objective or optimization needed.

\section{Preliminaries}\label{sec:background}
We first present the notation used in the paper and provide some background on Gaussian smoothing. 
Then, we show that the convergence bound for stochastic gradient descent (SGD) is proportional to the MSE of the gradient estimate, thereby providing the motivation behind the algorithms proposed in Section \ref{sec:algos}.

\subsection{Notation}\label{sec:notation}

We are interested in an unconstrained scalar minimization objective $F(\theta):\Re^d \to \Re$.
Suppose that it can only be accessed via a random evaluation $f(\theta,\xi)$ satisfying $\Ee_\xi f(\theta,\xi)=F(\theta)$.
For example, in supervised learning, the random evaluation could be the loss at a data point, and in reinforcement learning the negative of a trajectory reward.

First-order optimization methods estimate the gradient of the objective using samples from a gradient oracle $\nabla_\theta f(\theta,\xi)$.
Alternatively, as described next, the gradient of the objective may be estimated using only random evaluations, which are generally inexpensive.

\subsection{Gaussian smoothing}\label{sec:gaussiansmoothing}

Gaussian smoothing (GS) \cite{nesterov2017random} estimates the gradient of a function by generating a direction from a standard normal distribution, computing the directional derivative along the direction using function evaluations, and then multiplying the directional derivative with the direction.
The estimate can be plugged into any gradient-based optimization method, making Gaussian smoothing a widely applicable zero-order approach.

Specifically, the Gaussian smoothing gradient estimator is
\begin{align}\label{eq:gsestimator}
    \nabla_\theta F^{GS}(\theta) &= \dfrac{1}{c}F(\theta+c\epsilon)\epsilon, \quad \epsilon\sim\nN(0,\II).
\end{align}
It may be interpreted as a Monte Carlo estimate of the gradient of the following modified objective, after smoothing by a standard normal random variable:
\begin{align*}
    F_{c}(\theta)\triangleq\Ee_{\epsilon\sim\nN(0,\II)}[F(\theta+c\epsilon)], \quad c>0
\end{align*}
The gradient of the modified objective is given by
\begin{align}\label{eq:gsrationale}
    \nabla_\theta F_{c}(\theta) &= \Ee_{\epsilon\sim\nN(0,\II)}\left[\dfrac{1}{c}F(\theta+c\epsilon)\epsilon\right]
\end{align}
Because $\nabla_\theta F^{GS}(\theta)$ often has high variance in practice, popular alternatives are the forward-difference (FD) estimator and the antithetic (AT) estimator, which incorporate control variates: for $\epsilon\sim\nN(0,\II)$,
\begin{align}\label{eq:fdantithetic}
\begin{split}
    \nabla_\theta F^{FD}(\theta) &= \dfrac{1}{c}\left[F(\theta+c\epsilon)-F(\theta)\right]\epsilon \\
    \nabla_\theta F^{AT}(\theta) &= \dfrac{1}{2c}\left[F(\theta+c\epsilon)-F(\theta-c\epsilon)\right]\epsilon
\end{split}
\end{align}
The variance of each estimator can be reduced by averaging over multiple directions.
We focus on the FD estimator due to its simplicity and lower computational burden.

When $F(\theta)$ enjoys some mild regularity conditions and $c\to 0$, \citet{nesterov2017random} shows that iterative optimization using gradients estimated via \eqref{eq:fdantithetic} converges to a stationary point for non-convex objectives and the optimal point for convex ones.

\subsection{Convergence of biased SGD}\label{sec:sgdmse}

The estimators in \eqref{eq:fdantithetic} are unbiased for the gradient of the modified objective $F_c(\theta)$, but are biased for the gradient of the objective of interest $F(\theta)$.
Therefore, the usual convergence guarantees for iterative optimization, which generally assume unbiased gradient estimates, do not directly hold.
Recent works have analyzed the convergence properties of SGD with biased gradient estimates, for convex \cite{hu2016bandit} and smooth \cite{chen2018stochastic, ajalloeian2020analysis} objectives.
In Theorem \ref{thm:convergencemse}, we provide a convergence guarantee on SGD with biased gradient estimates that, in contrast to those works, depends on the MSE of the gradient estimates.
The proof is given in Appendix \ref{sec:sgdproof}.

\begin{theorem}\label{thm:convergencemse}
Assume that i) $F(\theta)$ is differentiable, $\mu$-smooth \footnote{$F(\theta)$ is $\mu$-smooth if $\|\nabla_\theta F(\theta_1)-\nabla_\theta F(\theta_2)\|_2\leq\mu\|\theta_1-\theta_2\|_2$ for any $\theta_1$ and $\theta_2$ in the domain of $F(\theta)$.}, and is bounded by $\Delta$ ii) the bias and the MSE of the gradient estimates $g^t$ satisfy $\|\Ee[g^t]-\nabla_\theta F(\theta^t)\|_2\leq B\|\nabla_\theta F(\theta^t)\|_2$ and $\Ee[\|g^t-\nabla_\theta F(\theta^t)\|_2^2]\leq M$, respectively iii) iterative updates are applied via $\theta^{t+1}=\theta^t-\eta g(\theta^t)$ for $T$ steps. Then, if $B< 0.5$, letting $\eta=\nicefrac{1}{\mu\sqrt{T}}$,
\begin{align*}
    \dfrac{1}{T}\sum_t\Ee\left(\|\nabla_\theta F(\theta^t)\|_2^2\right) &\leq \dfrac{M+4\Delta\mu}{(1-2B)\sqrt{T}}.
\end{align*}
\end{theorem}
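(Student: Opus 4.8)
The plan is to run the standard descent-lemma argument for smooth nonconvex SGD, but to carry the bias and MSE bounds through every step so that the final rate is expressed via $M$ rather than a variance. First I would invoke $\mu$-smoothness in the form of the quadratic upper bound $F(\theta^{t+1}) \le F(\theta^t) + \langle\nabla_\theta F(\theta^t), \theta^{t+1}-\theta^t\rangle + \tfrac{\mu}{2}\|\theta^{t+1}-\theta^t\|_2^2$, substitute the update $\theta^{t+1}-\theta^t = -\eta g^t$, and take the expectation conditional on $\theta^t$. This produces a descent term $-\eta\langle\nabla_\theta F(\theta^t), \Ee[g^t]\rangle$ and a curvature term $\tfrac{\mu\eta^2}{2}\Ee[\|g^t\|_2^2]$, both of which must be controlled using only assumption (ii).

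Next I would handle the two terms. Writing the bias as $b^t = \Ee[g^t]-\nabla_\theta F(\theta^t)$ with $\|b^t\|_2 \le B\|\nabla_\theta F(\theta^t)\|_2$, Cauchy--Schwarz gives $\langle\nabla_\theta F(\theta^t), \Ee[g^t]\rangle \ge \|\nabla_\theta F(\theta^t)\|_2^2 + \langle\nabla_\theta F(\theta^t), b^t\rangle$. For the second moment I would expand $\Ee[\|g^t\|_2^2] = \|\nabla_\theta F(\theta^t)\|_2^2 + 2\langle\nabla_\theta F(\theta^t), b^t\rangle + \Ee[\|g^t-\nabla_\theta F(\theta^t)\|_2^2]$, where the last term is exactly the MSE and is at most $M$, and the cross term is again controlled via $|\langle\nabla_\theta F(\theta^t), b^t\rangle| \le B\|\nabla_\theta F(\theta^t)\|_2^2$. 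Substituting $\eta = 1/(\mu\sqrt{T})$ and collecting the two cross terms (exploiting that their combined coefficient $-\eta+\mu\eta^2$ is nonpositive for $T\ge 1$), the coefficient of $\|\nabla_\theta F(\theta^t)\|_2^2$ becomes $-\eta(1-B) + \mu\eta^2(\tfrac12 - B)$; using $1/\sqrt{T}\le 1$ together with $B<\tfrac12$ this simplifies to at most $-\tfrac{1-2B}{2\mu\sqrt{T}}$, leaving a residual $\tfrac{M}{2\mu T}$ from the MSE. This step is where the hypothesis $B<0.5$ is essential: it is exactly what keeps the net coefficient negative so the iteration still makes progress despite the bias.

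Finally I would take total expectations via the tower property, rearrange into $\tfrac{1-2B}{2\mu\sqrt{T}}\,\Ee\|\nabla_\theta F(\theta^t)\|_2^2 \le \Ee[F(\theta^t)] - \Ee[F(\theta^{t+1})] + \tfrac{M}{2\mu T}$, sum over the $T$ iterations so the function values telescope, and use boundedness of $F$ by $\Delta$ to replace $F(\theta^0)-\Ee[F(\theta^T)]$ with $2\Delta$. Dividing by $T$ and multiplying through by $2\mu\sqrt{T}/(1-2B)$ then yields $\tfrac{1}{T}\sum_t \Ee\|\nabla_\theta F(\theta^t)\|_2^2 \le \tfrac{M+4\Delta\mu}{(1-2B)\sqrt{T}}$, the factor $4\Delta\mu$ arising from the $2\Delta$ telescoping gap times the $2\mu$ rescaling.

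The only genuine obstacle is the bias bookkeeping: unlike ordinary unbiased SGD, here $\langle\nabla_\theta F, \Ee[g]\rangle$ is no longer equal to $\|\nabla_\theta F\|_2^2$ and the second moment carries an extra cross term, so both the descent and the curvature terms leak a $B\|\nabla_\theta F\|_2^2$ contribution. Ensuring that these contributions aggregate into a coefficient that remains negative is the crux, and it is precisely what forces the $B<\tfrac12$ threshold and the $(1-2B)$ factor; everything else is routine substitution, telescoping, and the choice $\eta=1/(\mu\sqrt{T})$.
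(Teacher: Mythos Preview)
Your proposal is correct and follows essentially the same route as the paper's proof: descent lemma from $\mu$-smoothness, expansion of $\Ee[\|g^t\|_2^2]$ so that the MSE bound $M$ appears, Cauchy--Schwarz on the bias cross term to produce the $(1-2B)$ factor, then telescoping and substituting $\eta=1/(\mu\sqrt{T})$. The only cosmetic difference is that the paper writes $\Ee[\|g^t\|_2^2]=\|\nabla_\theta F(\theta^t)+b^t\|_2^2+\trace(v^t)$ and then invokes the bias--variance decomposition of the MSE, whereas you expand directly around $\nabla_\theta F(\theta^t)$ so that the MSE term appears without the intermediate step; the resulting per-step inequalities coincide.
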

This guarantee suggests that we may improve convergence by reducing the MSE of the gradient estimates.
Ideally, we would minimize the entire bound, but doing so is impractical as it requires knowledge of $\mu$ and $\Delta$.

The assumptions in Theorem \ref{thm:convergencemse} are standard in the analysis of convergence of iterative optimization for non-convex functions, but are fairly strong.
In particular, the objective is not bounded for linear regression.
However, we only use that assumption to bound the difference between the initial function value $F(\theta^0)$ and the optimal one $F(\theta^\star)$. 
Hence, it may be replaced with an assumption that the initialization has bounded distance from the optimal solution.

\section{Generalized smoothing}\label{sec:algos}
Our main idea is that by generalizing GS to be able to sample from arbitrary distributions, not just the standard normal, we may select the distribution to optimize any criterion.
In this paper, we propose to select the distribution that minimizes the MSE of the gradient estimates \eqref{eq:fdantithetic}, aiming to improve the convergence of SGD using those gradient estimates.

In this section, we present our proposed algorithms, with all proofs in Appendix \ref{sec:mainproofs}.
We focus on the forward difference gradient estimator where the objective is estimated via Monte Carlo sampling random evaluations; derivations for the antithetic gradient estimator are the same (see Appendix \ref{sec:antithetic}).
Mathematically, given $N$ random evaluations $\xi_i$ and $L$ IID sampled directions $\epsilon_l$, our estimator of interest is

\begin{align}\label{eq:fdempirical}
    \nabla_\theta \hat{F}^{FD}(\theta) &= \dfrac{1}{cLN}\sum_{l,i} (f(\theta+c\epsilon_l,\xi_i)-f(\theta,\xi_i))\epsilon_l.
\end{align}
We also make the following assumption:
\begin{assumption}\label{assume:perturb}
The entries of $\epsilon_l$, $\{\epsilon_{lj}\}_{j=1}^d$, are IID samples from a distribution with expectation $0$.
\end{assumption}

\subsection{MSE of the FD estimator}\label{sec:msefd}

We start by computing the MSE of $\nabla_\theta \hat{F}^{FD}(\theta)$. 

\begin{lemma}\label{thm:fdempmse}
Suppose that i) the first-order Taylor expansions of $F(\theta)$ and $f(\theta,\cdot)$ satisfy a regularity condition \footnote{See Appendix \ref{sec:mainproofs} for details.} ii) assumption \ref{assume:perturb} holds. Then, as $c\to 0$, the MSE of $\nabla_\theta \hat{F}^{FD}(\theta)$ approaches 
\begin{align}
    &\left((\sigma^2-1)^2+\dfrac{\sigma^4}{L}(d+k-2)\right)\|\nabla_\theta F(\theta)\|_2^2 \label{eq:msepart1} \\
    &\quad+\dfrac{\sigma^4}{LN}(d+k-1)\trace(\var_\xi[\nabla_\theta f(\theta,\xi)]), \label{eq:msepart2}
\end{align}
where $\sigma^2$ and $k$ are the variance and kurtosis of $\epsilon_{lj}$. 
The bias of $\nabla_\theta \hat{F}^{FD}(\theta)$ approaches $(\sigma^2-1)\nabla_\theta F(\theta)$.
\end{lemma}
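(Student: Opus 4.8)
The plan is to pass to the $c\to 0$ limit and then reduce the lemma to two elementary moment computations for the perturbation distribution. Writing $g_{li}=\nabla_\theta f(\theta,\xi_{li})$, a first-order Taylor expansion gives $\tfrac{1}{c}\big(f(\theta+c\epsilon_l,\xi_{li})-f(\theta,\xi_{li})\big)=\epsilon_l\trans g_{li}+R_{li}(c)$ with $R_{li}(c)\to 0$; hypothesis (i), the regularity condition on the Taylor expansions, is exactly what lets me pass the remainder through the finitely many expectations defining the MSE, so it suffices to analyze the limiting estimator $\hat G=\tfrac{1}{LN}\sum_{l,i}(\epsilon_l\trans g_{li})\epsilon_l$. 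The crucial structural observation is that grouping the inner sum yields $\hat G=\tfrac{1}{L}\sum_l(\epsilon_l\trans\hat g_l)\epsilon_l$ with $\hat g_l=\tfrac{1}{N}\sum_i g_{li}$, where the $\hat g_l$ are independent across directions (each direction drawing its own evaluations), independent of every $\epsilon$, and satisfy $\Ee[\hat g_l]=\nabla_\theta F(\theta)$ and $\var[\hat g_l]=\tfrac{1}{N}\var_\xi[\nabla_\theta f(\theta,\xi)]$.

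I would then split $\mathrm{MSE}=\Ee\|\hat G\|_2^2-2\,\nabla_\theta F(\theta)\trans\Ee[\hat G]+\|\nabla_\theta F(\theta)\|_2^2$. Under Assumption \ref{assume:perturb} the entries of $\epsilon_l$ are IID mean-zero with variance $\sigma^2$ and $\Ee[\epsilon_{lj}^4]=k\sigma^4$, so $\Ee[\epsilon_l\epsilon_l\trans]=\sigma^2\II$ and hence $\Ee[\hat G]=\sigma^2\nabla_\theta F(\theta)$, giving the bias $(\sigma^2-1)\nabla_\theta F(\theta)$ at once. For the second moment $\Ee\|\hat G\|_2^2=\tfrac{1}{L^2}\sum_{l,l'}\Ee\big[(\epsilon_l\trans\hat g_l)(\epsilon_{l'}\trans\hat g_{l'})(\epsilon_l\trans\epsilon_{l'})\big]$ I need two identities for deterministic $a,b$, both proved by expanding in coordinates and discarding any term containing an odd power of some entry: (i) $\Ee[(\epsilon\trans a)(\epsilon\trans b)\|\epsilon\|_2^2]=(d+k-1)\sigma^4\,a\trans b$, and (ii) for independent $\epsilon,\epsilon'$, $\Ee[(\epsilon\trans a)(\epsilon'\trans b)(\epsilon\trans\epsilon')]=\sigma^4\,a\trans b$.

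Conditioning on the $\hat g_l$ and applying (i) to the $L$ diagonal terms and (ii) to the $L(L-1)$ off-diagonal terms gives $\Ee\|\hat G\|_2^2=\tfrac{\sigma^4}{L}(d+k-1)\,\Ee\|\hat g_l\|_2^2+\tfrac{(L-1)\sigma^4}{L}\,\Ee[\hat g_l\trans\hat g_{l'}]$; independence makes $\Ee[\hat g_l\trans\hat g_{l'}]=\|\nabla_\theta F(\theta)\|_2^2$ for $l\neq l'$, while $\Ee\|\hat g_l\|_2^2=\|\nabla_\theta F(\theta)\|_2^2+\tfrac{1}{N}\trace(\var_\xi[\nabla_\theta f(\theta,\xi)])$. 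Collecting the gradient-norm and trace coefficients and then subtracting $2\sigma^2\|\nabla_\theta F(\theta)\|_2^2$ and adding $\|\nabla_\theta F(\theta)\|_2^2$ recovers \eqref{eq:msepart1}--\eqref{eq:msepart2}, the gradient-norm term simplifying via $\tfrac{\sigma^4}{L}(d+k+L-2)-2\sigma^2+1=(\sigma^2-1)^2+\tfrac{\sigma^4}{L}(d+k-2)$.

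The main obstacle is the bookkeeping in identity (i): only index patterns in which every coordinate of $\epsilon$ appears to an even power survive, and these split into the single $\Ee[\epsilon_j^4]=k\sigma^4$ contribution and the $d-1$ cross contributions $\Ee[\epsilon_j^2]\Ee[\epsilon_n^2]=\sigma^4$, which is precisely where both $d$ and $k$ enter. The one conceptual point to get right is the independence of the $\hat g_l$ across directions, since this is what yields the $\tfrac{1}{LN}$ (rather than $\tfrac{1}{L}$) scaling on the trace term; a single shared empirical gradient would instead couple its variance to every pair $(l,l')$. Justifying the $c\to 0$ interchange through hypothesis (i) is the only analytic, as opposed to algebraic, step.
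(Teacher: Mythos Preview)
Your proof is correct and close in spirit to the paper's, but organized differently. The paper decomposes $\mathrm{MSE}=\|\text{bias}\|_2^2+\trace(\var)$, then applies the law of total variance $\var=\var_\epsilon(\Ee_\xi[\cdot\mid\epsilon])+\Ee_\epsilon(\var_\xi[\cdot\mid\epsilon])$ together with the lemma $\trace(\Ee_\epsilon[\epsilon\epsilon\trans A\epsilon\epsilon\trans])=\sigma^4(d+k-1)\trace(A)$ to each piece. You instead expand $\Ee\|\hat G-\nabla_\theta F(\theta)\|_2^2$ directly and split the second moment over direction pairs $(l,l')$, conditioning on the empirical gradients $\hat g_l$ before integrating out the $\epsilon$'s; your identity (i) is exactly the rank-one case of the paper's trace lemma, and identity (ii) handles the cross-direction terms that the paper's bias/variance split absorbs elsewhere. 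Your route is a bit more elementary in that it avoids the law of total variance, while the paper's keeps the bias and the two variance components separately visible, which is convenient for the subsequent shrinkage arguments. Your explicit remark that the $\hat g_l$ must be independent across $l$ (each direction drawing its own evaluations) is well placed: this is precisely what yields the $\tfrac{1}{LN}$ rather than $\tfrac{1}{L}$ coefficient on the trace term, and the paper uses the same independence implicitly when it pulls $1/L$ out of $\Ee_\epsilon(\var_\xi[\cdot\mid\epsilon])$.
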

Notice that GS makes the same assumptions as Lemma \ref{thm:fdempmse}, except for (i). For the rest of this section, we operate in the setting where $c\to 0$.

\subsection{Bernoulli Smoothing}\label{sec:brs}

There is a very simple choice of $\epsilon_{lj}$ that reduces the MSE of $\nabla_\theta \hat{F}^{FD}(\theta)$.
For GS, $\epsilon_{lj}\sim\nN(0,1)$, so $\sigma^2=1$ and $k=3$ \cite{normal}, and the MSE of $\nabla_\theta \hat{F}^{FD}(\theta)$ equals $$\dfrac{d+1}{L}\|\nabla_\theta F(\theta)\|_2^2+\dfrac{d+2}{LN}\trace(\var_\xi[\nabla_\theta f(\theta,\xi)]).$$

Observe that if $\sigma^2=1$ is fixed (and the gradient estimate remains asymptotically unbiased), the MSE of $\nabla_\theta \hat{F}^{FD}(\theta)$ decreases with smaller kurtosis $k$.
Since the Bernoulli distribution has the smallest kurtosis of any distribution \cite{DeCarlo1997OnTM}, a natural proposal to reduce the MSE is $\epsilon_{lj}\sim (B_{0.5}-0.5)/0.5$, where $B_{0.5}$ follows the Bernoulli distribution with probability $0.5$.
In other words, $\epsilon_{lj}$ is a standardized fair Bernoulli random variable with expectation $0$, $\sigma^2=1$, and $k=1$ \cite{Bernoulli}, and the MSE of $\nabla_\theta \hat{F}^{FD}(\theta)$ would equal
$$\dfrac{d-1}{L}\|\nabla_\theta F(\theta)\|_2^2+\dfrac{d}{LN}\trace(\var_\xi[\nabla_\theta f(\theta,\xi)]),$$
smaller than its MSE for GS.

We call this proposal \textit{Bernoulli Smoothing} (BeS). 
It remains unknown whether there is a direct correspondence to the gradient of a smoothed objective, like for GS.
Note that the bias $B$ remains zero, so the corresponding convergence bound in Theorem \ref{thm:convergencemse} is smaller than that of GS; experimental results in Section \ref{sec:experiments} confirm that BeS is always at least competitive with, and usually outperforms, GS.

\subsection{Shrinkage gradient estimators}\label{sec:shrinkage}

We next take a more principled approach to reduce the MSE of $\nabla_\theta \hat{F}^{FD}(\theta)$ and find the distribution of $\epsilon_{lj}$ that minimizes it.
For mathematical tractability, we restrict to a certain distribution type (Gaussian or Bernoulli).
This method can easily be extended to other distribution types.

\mypara{Gaussian}
Since a Gaussian distribution is determined by its expectation and variance, has kurtosis $3$, and we assume that $\epsilon_{lj}$ has expectation $0$, we search over the variance $\sigma^2$. In particular, we minimize only the larger term of the MSE, \eqref{eq:msepart1}, since minimizing the entire MSE requires the gradients of $F(\theta)$ and $f(\theta,\xi)$; then, the problem reduces to solving
\begin{align}\label{eq:gsminimize}
    \min_{\sigma^2>0}\quad(\sigma^2-1)^2+\dfrac{\sigma^4}{L}(d+1)
\end{align}
Doing so is reasonable for learning from mini-batch samples because \eqref{eq:msepart2} is $\oO(1/N)$ smaller than \eqref{eq:msepart1}, where $N$ is the batch size. 
Moreover, the problem is simplified as \eqref{eq:gsminimize} is quadratic in $\sigma^2$ and thus has an analytic solution.

\begin{theorem}\label{thm:gsshrinkage}
The solution to \eqref{eq:gsminimize} is $\sigma^{2*}=\nicefrac{L}{L+d+1}$. When $\epsilon_{lj}\sim\nN(0,\sigma^{2*})$, the MSE of $\nabla_\theta \hat{F}^{FD}(\theta)$ is smaller than when $\epsilon_{lj}\sim\nN(0,1)$.
\end{theorem}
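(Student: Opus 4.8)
The plan is to split the statement into its two assertions and dispatch them in order: first solve the minimization \eqref{eq:gsminimize} exactly, then verify that the minimizer in fact decreases the \emph{full} MSE of Lemma \ref{thm:fdempmse}, not merely the truncated objective \eqref{eq:gsminimize}.

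For the first assertion, I would treat \eqref{eq:gsminimize} as a univariate problem in $x\triangleq\sigma^2>0$. The objective $g(x)=(x-1)^2+\frac{x^2}{L}(d+1)$ is a strictly convex quadratic, since its second derivative $2+\frac{2(d+1)}{L}$ is positive, so its unique unconstrained minimizer is obtained from the first-order condition $g'(x)=2(x-1)+\frac{2x}{L}(d+1)=0$. Rearranging gives $x\cdot\frac{L+d+1}{L}=1$, i.e. $\sigma^{2*}=\nicefrac{L}{L+d+1}$, which is automatically positive and hence feasible; convexity certifies it as the global minimizer.

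For the second assertion, the key observation is that \eqref{eq:gsminimize} is only the coefficient of $\|\nabla_\theta F(\theta)\|_2^2$ in the full MSE, whereas the full MSE of Lemma \ref{thm:fdempmse} (with Gaussian kurtosis $k=3$) also carries the term $\frac{\sigma^4}{LN}(d+2)\trace(\var_\xi[\nabla_\theta f(\theta,\xi)])$. I would argue that \emph{both} terms decrease simultaneously when passing from $\sigma^2=1$ to $\sigma^2=\sigma^{2*}$. The first term strictly decreases because $\sigma^{2*}$ is the unique minimizer of exactly that term and $\sigma^{2*}\neq 1$. For the second term, I would note that since $L<L+d+1$ we have $0<\sigma^{2*}<1$, hence $(\sigma^{2*})^2<1$; because this term is proportional to $\sigma^4=(\sigma^2)^2$, which is strictly increasing on $(0,\infty)$, it too is strictly smaller at $\sigma^{2*}$ than at $1$. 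As the multipliers $\|\nabla_\theta F(\theta)\|_2^2\geq 0$ and $\trace(\var_\xi[\nabla_\theta f(\theta,\xi)])\geq 0$ are nonnegative, the two decreases combine to give a strictly smaller total MSE whenever either of these quantities is nonzero, and equality only in the degenerate case where both vanish.

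The main obstacle is precisely this gap between what is optimized and what is claimed: minimizing only \eqref{eq:msepart1} need not in general reduce the companion term \eqref{eq:msepart2}. What rescues the argument is the fortunate fact that \eqref{eq:msepart2} is monotone in $\sigma^2$ and that the minimizer of \eqref{eq:msepart1} happens to lie strictly below the reference value $\sigma^2=1$. Once $0<\sigma^{2*}<1$ is established, both terms move in the favorable direction and the conclusion follows without any appeal to the unknown gradient norm or variance trace.
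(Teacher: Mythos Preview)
Your proposal is correct and follows essentially the same approach as the paper: substitute $x=\sigma^2$, minimize the convex quadratic to obtain $\sigma^{2*}=L/(L+d+1)$, then observe that $\sigma^{2*}<1$ forces the second MSE term \eqref{eq:msepart2} (proportional to $\sigma^4$) to decrease as well, so the full MSE drops. The only differences are cosmetic---you use the first-order condition where the paper uses the vertex formula, and you are slightly more careful in noting the degenerate equality case.
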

Notice that the variance of $\epsilon_{lj}$ has been shrunk towards zero, and the shrinkage increases as the data dimension $d$ increases. We call setting $\epsilon_{lj}\sim\nN(0,\sigma^{2*})$ \textit{GS-shrinkage}.

\mypara{Bernoulli}
Extending BeS, we consider $\epsilon_{lj}\sim (B_p-p)/m$, where $B_p$ follows the Bernoulli distribution with probability $p$, and search over $p$ and $m$.
$\epsilon_{lj}$ is centered, with variance $p(1-p)/m^2$ and kurtosis $3+\nicefrac{1-6p(1-p)}{p(1-p)}$.
As with the Gaussian case, we minimize only \eqref{eq:msepart1}, and the problem reduces to solving
\begin{align}\label{eq:bsminimize}
\begin{split}
    &\min_{p\in(0,1),m>0}\left(\dfrac{p(1-p)}{m^2}-1\right)^2 \\
    &\quad\quad\quad\quad+\dfrac{p^2(1-p)^2}{Lm^4}\left(d+1+\dfrac{1-6p(1-p)}{p(1-p)}\right)
\end{split}
\end{align}
Because \eqref{eq:bsminimize} is quadratic in $p(1-p)$ for fixed $m$, it can be solved analytically.

\begin{theorem}\label{thm:bsshrinkage}
Assume $L+d>5$. The solution to \eqref{eq:bsminimize} is $p^*=0.5$ and $m^*=\sqrt{\nicefrac{L+d-1}{4L}}$. When $\epsilon_{lj}\sim(B_p^*-p^*)/m^*$, the MSE of $\nabla_\theta \hat{F}^{FD}(\theta)$ is smaller than when $\epsilon_{lj}\sim (B_{0.5}-0.5)/0.5$.
\end{theorem}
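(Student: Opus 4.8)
The plan is to reparametrize \eqref{eq:bsminimize} so that it collapses to a one-dimensional problem of the same shape as the Gaussian case \eqref{eq:gsminimize}. First I would set $q=p(1-p)\in(0,\tfrac14]$ and observe that the Bernoulli kurtosis in Lemma~\ref{thm:fdempmse} simplifies to $k=3+\frac{1-6q}{q}=\frac1q-3$, so that $d+k-2=d-5+\frac1q$. Introducing the variance $\sigma^2=q/m^2$ as a second coordinate, \eqref{eq:bsminimize} becomes
\begin{align*}
    G(\sigma^2,q)=(\sigma^2-1)^2+\frac{\sigma^4}{L}\left(d-5+\frac1q\right),
\end{align*}
a genuine function of the two free variables $(\sigma^2,q)$ ranging over $\sigma^2>0$ and $q\in(0,\tfrac14]$, since $(p,m)\mapsto(\sigma^2,q)$ maps onto this region.

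Next I would minimize $G$ by first optimizing over $\sigma^2$ for fixed $q$. For each fixed $q$, $G$ is a parabola in $\sigma^2$ with leading coefficient $C(q)=\frac{L+d-5+1/q}{L}$; since $q\le\tfrac14$ forces $1/q\ge4$, we get $C(q)\ge\frac{L+d-1}{L}>0$, so the parabola opens upward, with minimizer $\sigma^2=1/C(q)$ and minimum value $1-1/C(q)$. I would then minimize $1-1/C(q)$ over $q$: as $C(q)$ is decreasing in $q$, the reduced objective is decreasing in $q$, so the optimum sits at the boundary $q=\tfrac14$, i.e.\ $p^*=0.5$. Substituting $q=\tfrac14$ gives $C=\frac{L+d-1}{L}$, hence $\sigma^{2*}=1/C=\frac{L}{L+d-1}$ and $m^{*2}=q/\sigma^{2*}=\frac{L+d-1}{4L}$, the claimed values. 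Equivalently, fixing $q=\tfrac14$ turns \eqref{eq:bsminimize} into $(\sigma^2-1)^2+\frac{\sigma^4}{L}(d-1)$, which is exactly the Gaussian problem \eqref{eq:gsminimize} with $d$ replaced by $d-2$, so this half would follow from Theorem~\ref{thm:gsshrinkage}. The hypothesis $L+d>5$ is what makes the route suggested after \eqref{eq:bsminimize}---fixing $m$ and reading \eqref{eq:bsminimize} off as a quadratic in $q$ with leading coefficient $\propto L+d-5$---a convex problem, so I would invoke it to justify the phrase ``quadratic in $p(1-p)$.''

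For the MSE comparison, I would exploit that $p^*=0.5$ yields exactly the same kurtosis $k=1$ as BeS, so the only change relative to BeS (where $\sigma^2=1$) is the shrunk variance $\sigma^{2*}=\frac{L}{L+d-1}\le1$. With $k=1$ the two MSE pieces of Lemma~\ref{thm:fdempmse} read $\big[(\sigma^2-1)^2+\frac{\sigma^4}{L}(d-1)\big]\|\nabla_\theta F(\theta)\|_2^2$ and $\frac{\sigma^4}{LN}\,d\,\trace(\var_\xi[\nabla_\theta f(\theta,\xi)])$. The first bracket is precisely the quantity just minimized, hence no larger at $\sigma^{2*}$ than at $\sigma^2=1$; the second is proportional to $\sigma^4$ and hence non-increasing because $\sigma^{2*}\le1$. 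Both inequalities are strict when $d>1$, giving a strictly smaller total MSE.

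The main obstacle I anticipate is the coupled nature of the optimization: $p$ and $m$ enter jointly through $q/m^2$ and through the kurtosis term, so naively fixing one variable leaves the optimum of the other dependent on it. The reparametrization $(\sigma^2,q)$ is the device that decouples this. The one genuinely substantive check is that the $\sigma^2$-parabola opens upward uniformly over the feasible $q$ (i.e.\ $C(q)>0$), which I would verify holds because $1/q\ge4$ on $(0,\tfrac14]$ even when $L+d-5$ is negative---so the boundary optimum $q=\tfrac14$, and thus $p^*=0.5$, is robust.
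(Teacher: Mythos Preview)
Your proof is correct and takes a genuinely different—and cleaner—route than the paper's.

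The paper parametrizes by $(x,y)=(p(1-p),m^2)$ and minimizes over $x$ first for fixed $y$. This requires the hypothesis $L+d>5$ to make the quadratic in $x$ convex, and then forces a three-case analysis because the unconstrained minimizer in $x$ may land outside $(0,\tfrac14]$; the paper compares the boundary values across three regimes in $y$ before concluding $x^*=\tfrac14$, and only then minimizes over $y$.

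Your reparametrization $(\sigma^2,q)=(p(1-p)/m^2,\,p(1-p))$ is more natural: the $\sigma^2$-direction is unconstrained (only $\sigma^2>0$), and the parabola in $\sigma^2$ opens upward for every feasible $q$ because $1/q\ge4$ forces $C(q)>0$ regardless of the sign of $L+d-5$. This eliminates the case analysis entirely, and the outer minimization over $q$ is monotone, landing immediately at the boundary $q=\tfrac14$. A side benefit is that your argument does not actually need the hypothesis $L+d>5$; as you correctly diagnose, that hypothesis is an artifact of the paper's order of optimization (fixing $m$ first). The MSE comparison is essentially identical in both proofs: same kurtosis, shrunk variance, so \eqref{eq:msepart1} is no larger by construction and \eqref{eq:msepart2} is no larger because $\sigma^4$ shrinks.
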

Notice that the variance of $\epsilon_{lj}$ has been shrunk towards zero, but the kurtosis is unchanged; again the amount of shrinkage increases as the data dimension $d$ increases. Therefore, we call setting $\epsilon_{lj}\sim(B_p^*-p^*)/m^*$ \textit{BeS-shrinkage}.

Table \ref{tab:algorithms} summarizes our three proposed algorithms and GS, and compare the corresponding MSEs of $\nabla_\theta \hat{F}^{FD}(\theta)$.
All three have the advantage that the distributions of $\epsilon_{lj}$ are objective-independent.
Assumption \ref{assume:perturb} ensures that BeS, GS-shrinkage, and BeS-shrinkage have the same computational complexity to sample the directions as GS, $\oO(Ld)$.

While the variance of $\epsilon_{lj}$ are similar for GS-shrinkage and BeS-shrinkage, the difference in the MSEs depends on $L$, $N$, $d$ and the magnitude of the gradients of $F(\theta)$ and $f(\theta,\xi)$ and may be substantial.
In particular, for high-dimensional problems, BeS-shrinkage can have smaller MSE at the beginning of optimization when $\nabla_\theta F(\theta)$ is large, while GS-shrinkage has smaller MSE at the end of optimization (see Appendix \ref{sec:gsvsbsshrinkage} for further details).
This suggests that BeS-shrinkage could lead to better sample efficiency for online reinforcement learning, where new trajectories are generated at each optimization iteration.

\begin{table*}[t]
\caption{Comparison of our proposed algorithms and GS for gradient estimation via \eqref{eq:fdempirical}. Recall that $\epsilon_{lj}$ is the random variable that each entry of the direction is sampled from, $L$ is the number of sampled directions, $d$ is the dimension of the parameter space, and $B_p$ is a Bernoulli random variable with probability $p$.}
\label{tab:algorithms}
\vskip 0.1in
\begin{center}
\begin{small}
\begin{sc}
\begin{tabular}{llcr}
\toprule
Algorithm & Distribution of $\epsilon_{lj}$ & Smaller MSE than \\
\midrule
GS & $\nN(0,1)$ & \\
BeS & $(B_{0.5}-0.5)/0.5$ & GS\\
GS-shrinkage & $\nN(0,\nicefrac{L}{L+d+1})$ & GS \\
BeS-shrinkage & $(B_{0.5}-0.5)/m^*$, $m^*=\sqrt{\nicefrac{L+d-1}{4L}}$ & BeS \\
\bottomrule
\end{tabular}
\end{sc}
\end{small}
\end{center}
\vskip -0.1in
\end{table*}

\section{Experiments}\label{sec:experiments}
We conduct experiments to i) validate the theoretical claims in Sections \ref{sec:background} and \ref{sec:algos} ii) compare the three proposed algorithms to GS and two previous algorithms, guided ES \cite{guidedes} and orthogonal ES \cite{choromanski2018structured}, and iii) investigate the impact of increasing the dimension $d$ or of using the antithetic gradient estimator instead of the forward difference gradient estimator.
The optimizer is SGD using the gradient estimator \eqref{eq:fdempirical} (or in (iii), its antithetic version).
The learning rate and spacing $c$ are chosen by grid search, to maximize the test performance at the end of optimization \footnote{The learning rate suggested by Theorem \ref{thm:convergencemse} is not practical to compute since it includes the smoothness of the objective.}.
Details are found in Appendix \ref{sec:expdetails} and code is available at \url{https://github.com/isl-org/generalized-smoothing}.

\subsection{Validating theory}\label{sec:validation}

\begin{figure*}[ht]
\includegraphics[width=\textwidth]{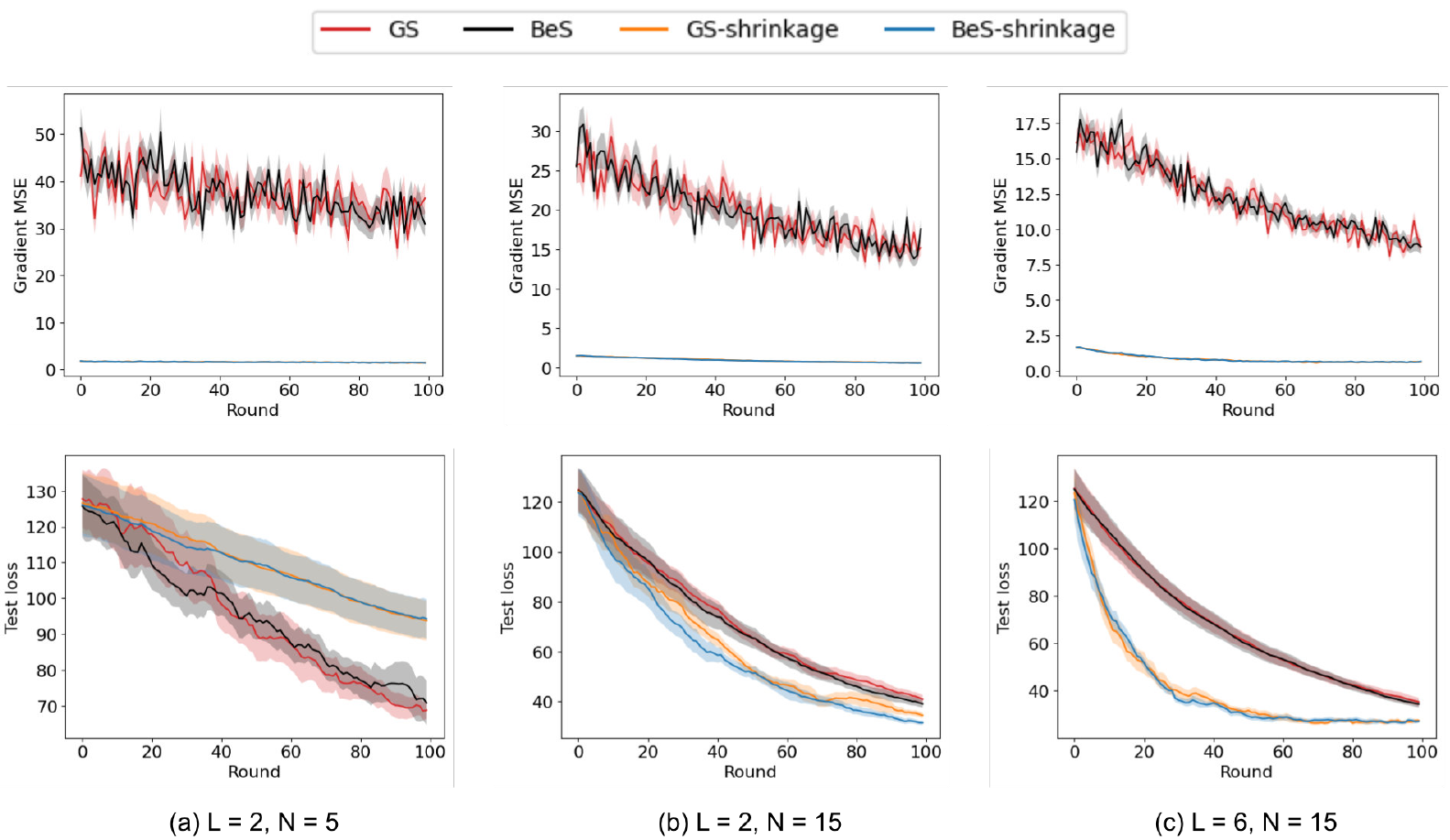}
\vskip -0.1in
\caption{For linear regression with various $L$ and $N$: MSE of the gradient (row $1$) and test loss (row $2$).}
\vskip -0.1in
\label{fig:linreg}
\end{figure*}

To validate the theoretical claims, we evaluate GS, BeS, GS-shrinkage, and BeS-shrinkage on linear regression with squared error loss, where analytical formulas for the gradient of the objective are available.
Our data model is from \citet{gao2020modeling}:
\begin{align}\label{eq:linregmodel}
\begin{split}
    &y=\gamma\trans x+\epsilon \quad \epsilon\sim\nN(0,\sigma^2) \quad x\sim\nN(0,\QQ) \\
    &\gamma\sim U([0,2]^d) \quad \sigma^2\sim U([0,2]) \\
    &\QQ=\VV\diag(\gamma)\VV\trans \quad \VV\sim U(\Se\Oe(d))
\end{split}
\end{align}

We show results for $d=100$ in the online setting, where $N$ new data points are sampled from the model at each optimization iteration.
The first row of Figure \ref{fig:linreg} plots the MSEs of the gradient estimator \eqref{eq:fdempirical} as optimization progresses, for the four algorithms over a range of values for $L$ and $N$.
The second row plots the corresponding losses on a test set, generated from \eqref{eq:linregmodel}.
Appendix \ref{sec:linregdetails} contains similar plots for more values of $L$ and $N$.
We control for $L$ and $N$ since they directly affect the MSE, as seen from Lemma \ref{thm:fdempmse}; the average is taken over five randomly generated seeds and the bands indicate one standard deviation.

The MSE of the gradient for GS-shrinkage and BeS-shrinkage is always substantially smaller than for GS and BeS; the differences between GS and BeS and between GS-shrinkage and BeS-shrinkage are not statistically significant.
In terms of the test loss, the story is mixed, but the main difference is between GS/BeS and GS-shrinkage/BeS-shrinkage.
For $L=2, N=5$, standard errors are large and GS \& BeS appear to outperform GS-shrinkage \& BeS-shrinkage.
However, for $L=2$ \& $N=15$ and $L=6$ \& $N=15$, GS-shrinkage \& BeS-shrinkage statistically significantly outperform GS \& BeS; BeS-shrinkage appears to be slightly better in the first case, while the two are competitive in the second case.
This suggests that the lower MSE of BeS-shrinkage compared to GS-shrinkage at the beginning of optimization may indeed translate to better test performance.

\subsection{Online reinforcement learning}\label{sec:comparisons}

\begin{figure*}[h!]
\vskip 0.1in
\includegraphics[width=\textwidth]{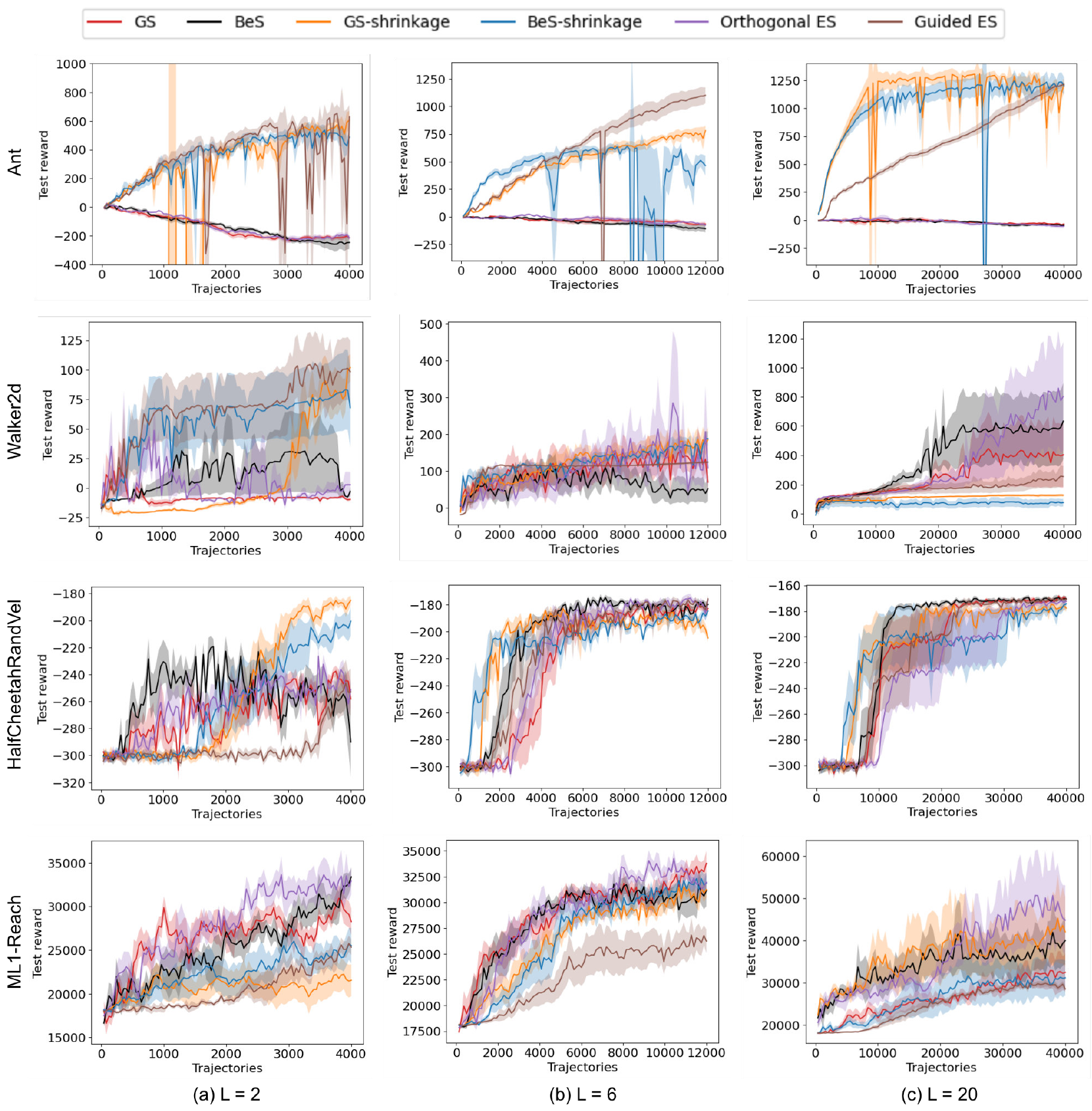}
\caption{For RL with various $L$: test episodic reward}
\label{fig:rl}
\vskip -0.1in
\end{figure*}

The remaining experiments compare BeS, GS-shrinkage, and BeS-shrinkage to GS and two algorithms from the literature that also aim to improve GS by choosing the distribution from which the directions are sampled to satisfy some criterion.
In order to speed up convergence, Guided ES \cite{guidedes} samples from a Gaussian distribution whose covariance matrix incorporates previous gradient estimates during optimization.
Orthogonal ES \cite{choromanski2018structured} samples from a standard normal distribution and then orthogonalizes the directions, which reduces the MSE of the gradient estimate compared to GS.

We experiment on four RL benchmarks based on the MuoJoCo physics simulator \cite{mujoco}.
Two environments are classic locomotion environments, where the goal is to learn a policy that successfully walks: \emph{i) Ant} and \emph{ii) Walker2d} from OpenAI Gym \cite{gym}.
The other two environments are from meta-RL, where the goal is to learn a policy that succeeds over a distribution of tasks: \emph{iii) ML1-Reach}: Introduced in \citet{metaworld}, tasks correspond to moving a robot arm to random locations. \emph{iv) HalfCheetahRandVel} \cite{maml}: tasks correspond to HalfCheetah locomotion robots with random target velocities. We use the version provided in the repository for \citet{promp}.
Experiments on additional environments are in Appendix \ref{sec:rldetails}.

Concretely, the objective is the episodic reward of a linear policy.
Following \citet{mania2018simple}, during optimization we standardize the observations, divide the rewards at each iteration by their standard deviation, and remove the survival bonus of Ant and Walker2d.
Dividing the rewards at each iteration by their standard deviation stabilizes the optimization and removes the need for a tuned learning rate schedule, but may also diminish the benefit conferred by a decrease in gradient estimate MSE; therefore, GS has an advantage here.
Figure \ref{fig:rl} plots the episodic reward of the learned policy, tested on reinitializations of the environment (which includes the task for meta-RL); the horizontal axis is the number of trajectories generated in the simulator.
Since the total number of evaluations to compute the gradient estimate is $2LN$, we see from Lemma \ref{thm:fdempmse} that given a budget of evaluations that we can obtain at one time, having $L$ be as large as possible minimizes the MSE.
Thus, we show results for a range of values of $L$ and $N=1$, averaging over five randomly generated seeds; in this setting $L$ would correspond to the number of robots available to collect data in the real world.

Table \ref{tab:rltimes} displays the average computation time required to sample the directions in each optimization iteration for each algorithm on HalfCheetahRandVel, using $L$ Xeon E7-8890 v3 CPUs; these numbers only depend on the parameter dimension $d$.
We do not include the time required to generate the trajectories in the simulator, as it would be the same for all algorithms and vary between different simulators.
BeS, GS-shrinkage, and BeS-shrinkage have the same computational complexity as GS, but Guided ES and Orthogonal ES have higher complexity because they require Gram-Schmidt orthonormalization.
As expected, BeS, GS-shrinkage, and BeS-shrinkage have similar direction sampling time to GS, while Orthogonal ES takes at least $3\times$ more time and Guided ES at least $10\times$ more time.

In the majority of cases in Figure \ref{fig:rl}, BeS learns more quickly and achieves higher reward than GS.
GS-shrinkage and BeS-shrinkage usually improve on BeS in the three locomotion environments, in particular when fewer trajectories are generated at each iteration.
For Ant, GS-shrinkage and BeS-shrinkage outperform the other algorithms except Guided ES by a large margin, albeit with some instability, which may be reduced by adding momentum to the optimization.
They are competitive with Guided ES when $L=2$ and outperform it when $L=20$ and $L=6$ (at the beginning of optimization).
For Walker2d $L=2$, BeS-shrinkage and Guided ES outperform all other algorithms; when $L=6$, BeS-shrinkage, GS-shrinkage, and Orthogonal ES improve slightly on the others, and when $L=20$, BeS and Orthogonal ES are the best at the beginning and end of optimization, respectively.
For HalfCheetahRandVel, BeS-shrinkage and GS-shrinkage learn a successful policy the fastest, but the other algorithms are able to catch up by the end of optimization for $L=6$ and $L=20$.
However, for ML1-Reach, Orthogonal ES is the overall best algorithm, followed by BeS (and GS when $L=2$ and $L=6$).
Altogether, our proposed algorithms are at least competitive with Orthogonal ES and Guided ES, with the exception of ML1-Reach.

\begin{table}[t]
\caption{Time ($10^{-5}$ s) to sample directions in each optimization iteration for each algorithm, on HalfCheetahRandVel.}
\label{tab:rltimes}
\vskip 0.1in
\begin{center}
\begin{sc}
\resizebox{\columnwidth}{!}{
\begin{tabular}{@{}l@{}S[
		table-number-alignment = right,
		separate-uncertainty = true,
		table-figures-uncertainty = 1
	]@{}S[
		table-number-alignment = right,
		separate-uncertainty = true,
		table-figures-uncertainty = 1
	]@{}S[
		table-number-alignment = right,
		separate-uncertainty = true,
		table-figures-uncertainty = 1
	]@{}}
\toprule
Algorithm & \multicolumn{1}{r}{$L=2$} & \multicolumn{1}{r}{$L=6$} & \multicolumn{1}{r}{$L=20$} \\
\midrule
GeS            & 7.5 \pm 0.1  & 13.1 \pm 0.1 & 19.9 \pm 0.1 \\ 
BeS            & 8.9 \pm 0.1  & 13.5 \pm 0.1  & 21.6 \pm 0.2 \\ 
GeS-shrinkage  & 8.5 \pm 0.1  & 13.6 \pm 0.1 & 24.0 \pm 0.2 \\ 
BeS-shrinkage  & 8.7 \pm 0.1  & 14.2 \pm 0.1 & 19.1 \pm 0.1 \\ 
Orthogonal ES & 22.1 \pm 0.1 & 50.8 \pm 0.2 & 96.4 \pm 0.5 \\
Guided ES     & 193 \pm 1.9  & 228 \pm 1.8  & 222 \pm 2.2 \\ 
\bottomrule
\end{tabular}}
\end{sc}
\end{center}
\vskip -0.1in
\end{table}

\subsection{Ablations}\label{sec:ablations}

Our next experiment studies whether the conclusions in the previous subsection generalize to using i) a neural network policy or ii) the antithetic gradient estimator.
We repeat the set-up of Section \ref{sec:comparisons}, but only on Ant with $L=20$.
Figure \ref{fig:rlablations} plots the test episodic reward of the learned policy against the number of trajectories generated during optimization.

\mypara{Neural network policy}
Instead of a linear policy, we use a MLP policy with one hidden layer of $32$ nodes and tanh activations, which is popular in the policy gradient RL literature \cite{maml}.
GS-shrinkage and BeS-shrinkage outperform the other algorithms, with BeS-shrinkage learning statistically significantly faster.
Guided ES falls behind GS, BeS, and Orthogonal ES, which start out strong but deteriorates, indicating difficulty with tuning learning rates.

\mypara{Antithetic gradient estimator}
Instead of the forward difference gradient estimator \eqref{eq:fdempirical}, we use the antithetic gradient estimator \eqref{eq:atempirical}.
We show in Appendix \ref{sec:antithetic} that doing so does not affect the validity of BeS, GS-shrinkage, and BeS-shrinkage, but may be helpful depending on characteristics of the objective \cite{choromanski2018structured}.
We see that this is indeed true, the performance of all algorithms improve; GS, BeS, and Orthogonal ES are able to learn.
The algorithm with highest reward changes from Guided ES to BeS-shrinkage and BeS as optimization progresses.

\begin{figure}[t]
\vskip 0.1in
\centering
    \begin{center}
        \includegraphics[width=0.42\textwidth]{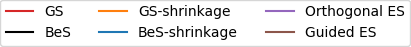}
    \end{center}%
    \begin{subfigure}[b]{0.33\textwidth}
        \includegraphics[width=\textwidth]{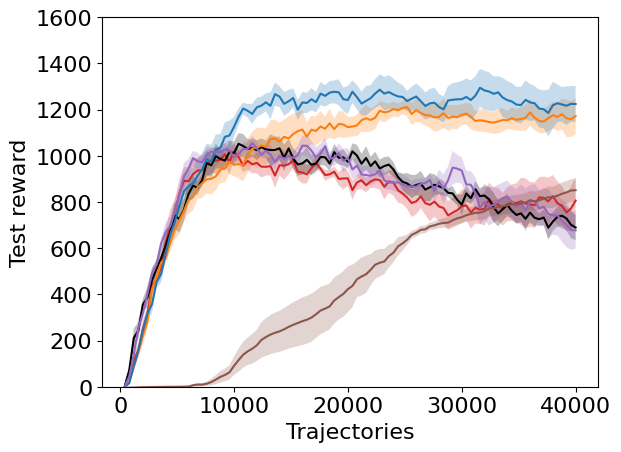}
         \caption{MLP policy}
    \end{subfigure}%

    \begin{subfigure}[b]{0.33\textwidth}
         \includegraphics[width=\textwidth]{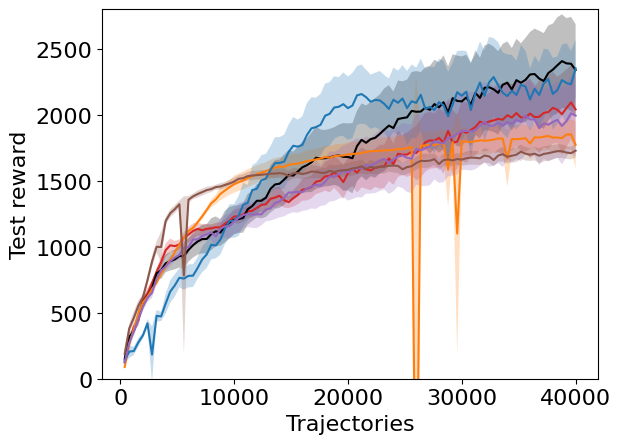}
         \caption{Antithetic gradient estimator}
    \end{subfigure}%
\caption{For Ant with $L=20$}
\label{fig:rlablations}
\vskip -0.1in
\end{figure}

\subsection{DFO benchmarks}\label{sec:dfobenchmark}

Finally, we experiment on the \emph{noisy} benchmark from Nevergrad \cite{nevergrad}, a DFO library.
It consists of four classical minimization objectives, \emph{sphere}, \emph{rosenbrock}, \emph{cigar}, \emph{hm}, with only noisy evaluations available during optimization.
We consider data dimensions $d=10$ or $d=100$ and a range of values of $L$ for computing the gradient estimate at each iteration. 
As in Section \ref{sec:comparisons}, we show results for $N=1$ averaged over five randomly generated seeds.

Figure \ref{fig:benchmark} plots the objective as the optimization progresses for the six algorithms.
In most cases, none of the algorithms were able to minimize \emph{cigar}, possibly because it is too ill-conditioned to find good perturbation directions without very large $L$.
Therefore, we do not include it in our plots.

The qualitative results are similar for all three objectives; GS, BeS, and Orthogonal ES are the best algorithms.
When $L<d$ (first two columns), BeS often outperforms GS and Orthogonal ES at the beginning of optimization, but is overtaken by them at the end of optimization.
When $L=d$ (last column), GS and BeS are not statistically significantly different, and outperform Orthogonal ES, which now behaves similarly to GS-shrinkage and BeS-shrinkage.

\begin{figure*}[h!]
\vskip 0.1in
\includegraphics[width=\textwidth]{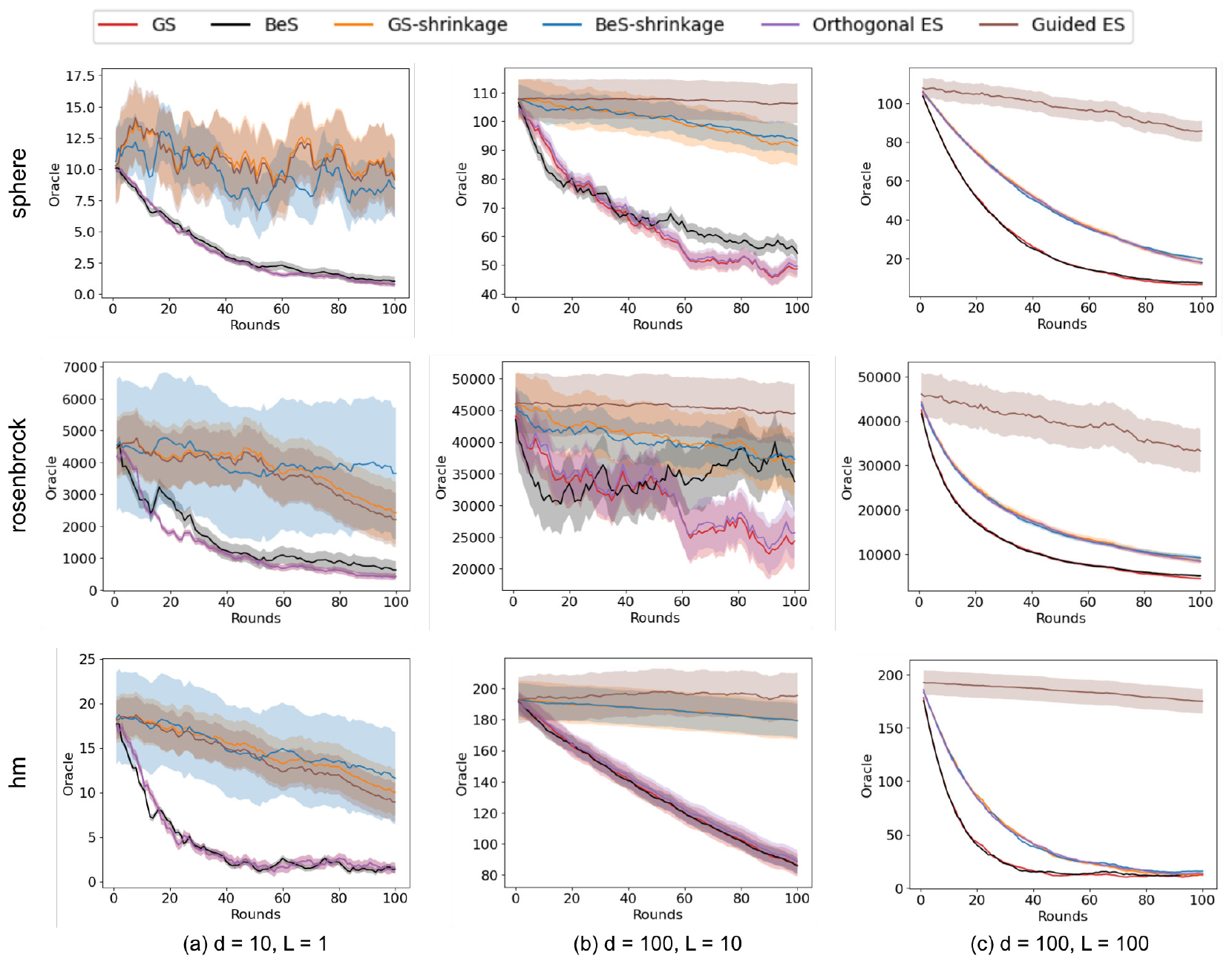}
\caption{For \emph{noisy} DFO benchmark: objective values. When $L=1$, GS is the same as Orthogonal ES.}
\label{fig:benchmark}
\vskip -0.1in
\end{figure*}

\subsection{Discussion}\label{sec:discussion}

The experiments show that by designing the distribution of the directions to minimize the MSE of the gradient, we are often able to obtain superior test performance to GS while remaining as computationally efficient, especially when there are few data points available at each iteration.
Moreover, we are usually competitive with, and in many cases outperform, previously proposed algorithms to improve GS that are more computationally expensive.

One limitation of our work is that GS-shrinkage and BeS-shrinkage do not always outperform GS and BeS, although they have smaller gradient estimate MSE.
We hypothesize that this is due to a form of the bias-variance trade-off.
Theorem \ref{thm:convergencemse} shows that the convergence bound includes the bias of the gradient estimate as well as the MSE in a non-linear and problem-dependent way.
GS and BeS have smaller bias than GS-shrinkage and BeS-shrinkage but larger variance, so it is not surprising that their relative effectiveness is problem-dependent.
Exploring how to adaptively balance the trade-off is an exciting avenue for future work.

The assumptions made highlight other potential directions for future work. We may allow the directions to have dependent entries, learn the distribution type instead of fixing it, or consider how different optimization algorithms may affect the nature of the optimal distribution. We could also design algorithms that, instead of minimizing only the largest term of the MSE \eqref{eq:msepart1}, apply Follow the Regularized Leader \cite{borsos2018online} to minimize the entire MSE over all optimization iterations using estimates of the gradients of the objective and the random evaluation.

\section{Conclusion}\label{sec:conclusion}
In this paper, we generalize Gaussian smoothing to sample directions from arbitrary distributions.
Doing so enables us to choose distributions that minimize the MSE of the gradient estimates and speed up optimization convergence.
We construct three distributions that lead to lower MSE than the standard normal without needing any information about the objective.
Experiments on linear regression confirm our theoretical results and experiments on reinforcement learning and DFO benchmarks show that the derived algorithms often improve over GS.

\section*{Acknowledgements}
We would like to thank the other members of Emergent AI at Intel Labs for providing helpful feedback on the submission.


\bibliography{refs}
\bibliographystyle{icml2022}

\newpage
\appendix
\onecolumn
\section{Proofs for Section \ref{sec:background}}\label{sec:sgdproof}
\subsection{Proof of Theorem \ref{thm:convergencemse}}

Since $F(\theta)$ is differentiable and $\mu$-smooth,
\begin{align*}
    F(\theta^{t+1}) \leq F(\theta^t)-\eta \nabla_\theta F(\theta^t)\trans g(\theta^t)+\dfrac{\mu\eta^2}{2}\|g(\theta^t)\|_2^2.
\end{align*}

For convenience, let $b^t$ and $v^t$ be the bias and variance of $g^t$, respectively.
Taking the expectation with respect to the randomness in $g^t$,
\begin{align*}
    \Ee[F(\theta^{t+1})]&\leq F(\theta^t)-\eta\nabla_\theta F(\theta^t)\trans(\nabla_\theta F(\theta^t)+b^t)+\dfrac{\mu\eta^2}{2}(\|\nabla_\theta F(\theta^t)+b^t\|_2^2+\trace(v^t)) \\
    &= F(\theta^t)-\eta\|\nabla_\theta F(\theta^t)\|_2^2-\eta\nabla_\theta F(\theta^t)\trans b^t+\dfrac{\mu\eta^2}{2}\|\nabla_\theta F(\theta^t)\|_2^2+\dfrac{\mu\eta^2}{2}\|b^t\|_2^2+\mu\eta^2\nabla_\theta F(\theta^t)\trans b^t+\dfrac{\mu\eta^2}{2}\trace(v^t)
\end{align*}
The first inequality uses the definition of variance.

Since $\eta\leq 1/\mu$, we have
\begin{align*}
    \Ee[F(\theta^{t+1})]&\leq F(\theta^t)-\dfrac{\eta}{2}\|\nabla_\theta F(\theta^t)\|_2^2+\dfrac{\mu\eta^2}{2}M+\eta(\mu\eta-1)\nabla_\theta F(\theta^t)\trans b^t \\
    &\leq F(\theta^t)-\dfrac{\eta}{2}\|\nabla_\theta F(\theta^t)\|_2^2+\dfrac{\mu\eta^2}{2}M+\eta B\|\nabla_\theta F(\theta^t)\|_2^2
\end{align*}
where the first line follows from the fact that the MSE of an estimate can be decomposed into the sum of the trace of the variance and the squared norm of the bias \cite{msedecomp} and the second line follows from the assumption on the norm of the bias of $g^t$.

Rearranging, we obtain for $B<\dfrac{1}{2}$
\begin{align*}
    \eta(\dfrac{1}{2}-B)\|\nabla_\theta F(\theta^t)\|_2^2 &\leq F(\theta^t)-\Ee[F(\theta^{t+1})]+\dfrac{\mu\eta^2}{2}M \\
    \dfrac{1}{T}\sum_t\Ee\left(\|\nabla_\theta F(\theta^t)\|_2^2\right) &\leq \dfrac{4\Delta}{\eta(1-2B)T}+\dfrac{\mu\eta M}{1-2B} \\
    &=\dfrac{M+4\Delta\mu}{(1-2B)\sqrt{T}} \quad\text{for $\eta=\dfrac{1}{\mu\sqrt{T}}$.}
\end{align*}

\section{Proofs for Section \ref{sec:algos}}\label{sec:mainproofs}
We first present a lemma that will be useful for subsequent proofs.

\begin{lemma}\label{lem:tracequartic}
Suppose that the $d$ entries of a vector $\epsilon$ are IID samples from a distribution with expectation $0$, variance $\sigma^2$, and kurtosis $k$. For any matrix $A$, $\trace(\Ee_\epsilon(\epsilon\epsilon\trans A\epsilon\epsilon\trans))=\sigma^4(d+k-1)\trace(A)$.
\end{lemma}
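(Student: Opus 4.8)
The plan is to first collapse the matrix-valued quantity inside the expectation to a scalar by exploiting the cyclic invariance of the trace, and only then take expectations over the IID entries. Since $\epsilon\trans A\epsilon$ is a scalar, I can pull it out and write $\epsilon\epsilon\trans A\epsilon\epsilon\trans=(\epsilon\trans A\epsilon)\,\epsilon\epsilon\trans$, so that $\trace(\epsilon\epsilon\trans A\epsilon\epsilon\trans)=(\epsilon\trans A\epsilon)\trace(\epsilon\epsilon\trans)=(\epsilon\trans A\epsilon)(\epsilon\trans\epsilon)$. Because expectation commutes with the trace, the whole problem reduces to evaluating the scalar $\Ee_\epsilon[(\epsilon\trans A\epsilon)(\epsilon\trans\epsilon)]$.

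Next I would expand both factors into index sums, $\epsilon\trans A\epsilon=\sum_{i,j}A_{ij}\epsilon_i\epsilon_j$ and $\epsilon\trans\epsilon=\sum_m\epsilon_m^2$, and exchange sum with expectation to obtain $\sum_{i,j,m}A_{ij}\,\Ee[\epsilon_i\epsilon_j\epsilon_m^2]$. The key reduction is that the IID, mean-zero hypotheses annihilate every off-diagonal term: whenever $i\neq j$, at least one of $\epsilon_i,\epsilon_j$ appears to an \emph{odd} power in $\epsilon_i\epsilon_j\epsilon_m^2$ and factors out as an independent mean-zero expectation, so $\Ee[\epsilon_i\epsilon_j\epsilon_m^2]=0$. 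Only the diagonal $i=j$ contributes.

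For the surviving diagonal terms I split on whether $m$ coincides with $i$. Using $\Ee[\epsilon_i^2]=\sigma^2$ and, by the definition of kurtosis for a centered variable, $\Ee[\epsilon_i^4]=k\sigma^4$, I get $\sum_m\Ee[\epsilon_i^2\epsilon_m^2]=\Ee[\epsilon_i^4]+\sum_{m\neq i}\Ee[\epsilon_i^2]\Ee[\epsilon_m^2]=k\sigma^4+(d-1)\sigma^4=\sigma^4(d+k-1)$. Summing the diagonal contributions then gives $\sum_i A_{ii}\,\sigma^4(d+k-1)=\sigma^4(d+k-1)\trace(A)$, which is exactly the claim.

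There is no deep obstacle here; the argument is essentially careful moment bookkeeping. The one place demanding attention is the case analysis over index triples $(i,j,m)$: I must confirm that the odd-power cross terms with $i\neq j$ and $m\in\{i,j\}$ (such as $\Ee[\epsilon_i^3\epsilon_j]$) vanish along with the fully distinct ones, and correctly isolate the single $m=i$ fourth-moment term from the $(d-1)$ factored second-moment terms. Getting that split right is precisely what produces the coefficient $d+k-1$.
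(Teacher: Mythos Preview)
Your proposal is correct and follows essentially the same approach as the paper: both reduce to the index sum $\sum_{i,j,m}A_{ij}\Ee[\epsilon_i\epsilon_j\epsilon_m^2]$, kill the $i\neq j$ terms by independence and mean zero, and split the surviving diagonal into the single fourth-moment term $k\sigma^4$ and the $d-1$ factored second-moment terms. The only cosmetic difference is that you take the trace first (via the scalar factoring $\epsilon\epsilon\trans A\epsilon\epsilon\trans=(\epsilon\trans A\epsilon)\epsilon\epsilon\trans$) and then expand, whereas the paper computes each diagonal entry $\Ee[\epsilon_j^2\,\epsilon\trans A\epsilon]$ first and sums over $j$ at the end.
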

\begin{proof}
It suffices to sum the diagonal entries of $\Ee_\epsilon(\epsilon\epsilon\trans A\epsilon\epsilon\trans)$. The $j\th$ entry is
\begin{align*}
    \Ee_\epsilon(\epsilon\epsilon\trans A\epsilon\epsilon\trans)_{jj} &=\Ee_\epsilon(\epsilon_j^2\sum_{}A_{ab}\epsilon_a\epsilon_b) \\
    &=\Ee_\epsilon(\epsilon_j^4)A_{jj}+\Ee_\epsilon(\epsilon_j^2)\Ee_\epsilon(\epsilon_b^2)\sum_{b\neq j}A_{bb} \\
    &=k\sigma^4A_{jj}+\sigma^4(\trace(A)-A_{jj}) = \sigma^4(\trace(A)+(k-1)A_{jj})
\end{align*}
Summing up over $j$,
\begin{align*}
    \trace(\Ee_\epsilon(\epsilon\epsilon\trans A\epsilon\epsilon\trans)) &= \sigma^4(d\trace(A)+(k-1)\trace(A))=\sigma^4(d+k-1)\trace(A)
\end{align*}
\end{proof}

\subsection{Proof of Lemma \ref{thm:fdempmse}}

The MSE of an estimator can be decomposed as the sum of the squared norm of its bias and the trace of its variance \cite{msedecomp}.
Thus, we compute the bias and the variance of $\nabla_\theta \hat{F}^{FD}(\theta)$ separately.

\paragraph{Bias}
\begin{align*}
    \Ee[\nabla_\theta \hat{F}^{FD}(\theta)] &= \Ee_{\epsilon,\xi}\left[\dfrac{1}{cLN}\sum_{l,i} (f(\theta+c\epsilon_l,\xi_i)-f(\theta,\xi_i))\epsilon_l\right] \\
    &= \Ee_\epsilon\left[\dfrac{1}{cL}\sum_l (F(\theta+c\epsilon_l)-F(\theta))\epsilon_l\right] \\
    &= \Ee_\epsilon\left[\dfrac{1}{c}(F(\theta+c\epsilon)-F(\theta))\epsilon\right] \\
    &= \Ee_\epsilon\left[\dfrac{1}{c}(c\nabla_\theta F(\theta)\trans\epsilon+c^2\epsilon\trans h(\theta+c\epsilon)\epsilon)\epsilon\right]
\end{align*}
where the last equality follows from Taylor's Theorem \cite{zeidler1986nonlinear} and $h$ is some scalar-valued function such that $h(y)\to 0$ as $y\to\theta$.
\begin{align*}
    \Ee[\nabla_\theta \hat{F}^{FD}(\theta)] &= \Ee_\epsilon\left[\epsilon(\epsilon\trans\nabla_\theta F(\theta)+c\epsilon\trans h(\theta+c\epsilon)\epsilon)\right] \\
    &= \sigma^2\nabla_\theta F(\theta)+c\Ee_\epsilon[\epsilon\epsilon\trans h(\theta+c\epsilon)\epsilon]
\end{align*}
where $\sigma^2$ is the variance of each entry of $\epsilon$.
If the Dominated Convergence Theorem \cite{billingsley1995probability} holds, i.e. $|\epsilon\epsilon\trans h(\theta+c\epsilon)\epsilon|$ is upper bounded by some integrable function of $\epsilon$, then as $c\to 0$,
\begin{align*}
    \Ee[\nabla_\theta \hat{F}^{FD}(\theta)] &\to \sigma^2\nabla_\theta F(\theta),
\end{align*}
and the squared norm of the bias of $\nabla_\theta \hat{F}^{FD}(\theta)$ is $(\sigma^2-1)^2\|\nabla_\theta F(\theta)\|_2^2$.

\paragraph{Variance}

Using the law of total variance \cite{billingsley1995probability},
\begin{align*}
    \var[\nabla_\theta \hat{F}^{FD}(\theta)] &= \var_\epsilon(\Ee_\xi[\nabla_\theta \hat{F}^{FD}(\theta)\mid\epsilon])+\Ee_\epsilon(\var_\xi[\nabla_\theta \hat{F}^{FD}(\theta)\mid\epsilon]) \\
    &=\var_\epsilon\left[\dfrac{1}{cL}\sum_l(F(\theta+c\epsilon_l)-F(\theta))\epsilon_l\right]+\Ee_\epsilon\left(\dfrac{1}{c^2L}\var_\xi\left[\dfrac{1}{N}\sum_i(f(\theta+c\epsilon,\xi_i)-f(\theta,\xi_i))\epsilon\mid\epsilon\right]\right) \\
    &=\dfrac{1}{c^2L}\var_\epsilon\left[(F(\theta+c\epsilon)-F(\theta))\epsilon\right]+\dfrac{1}{c^2LN}\Ee_\epsilon\left(\var_\xi\left[(f(\theta+c\epsilon,\xi)-f(\theta,\xi))\epsilon\mid\epsilon\right]\right) \\
    &=\dfrac{1}{c^2L}\var_\epsilon\left[\epsilon(c\epsilon\trans\nabla_\theta F(\theta)+c^2\epsilon\trans h(\theta+c\epsilon)\epsilon)\right] \\
    &\quad+\dfrac{1}{c^2LN}\Ee_\epsilon\left(\var_\xi\left[\epsilon(c\epsilon\trans\nabla_\theta f(\theta,\xi)+c^2\epsilon\trans h'(\theta+c\epsilon,\xi)\epsilon)\right]\right) \\
    &=\dfrac{1}{L}\var_\epsilon\left[\epsilon(\epsilon\trans\nabla_\theta F(\theta)+c\epsilon\trans h(\theta+c\epsilon)\epsilon)\right]+\dfrac{1}{LN}\Ee_\epsilon\left(\var_\xi\left[\epsilon(\epsilon\trans\nabla_\theta f(\theta,\xi)+c\epsilon\trans h'(\theta+c\epsilon,\xi)\epsilon)\right]\right)
\end{align*}
where the next-to-last equality follows from Taylor's Theorem and $h'$ is some scalar-valued function with the same condition as $h$.
Assuming that the Dominated Convergence Theorem holds again, as $c\to 0$,
\begin{align*}
    \var[\nabla_\theta \hat{F}^{FD}(\theta)] &\to \dfrac{1}{L}\var_\epsilon\left[\epsilon\epsilon\trans\nabla_\theta F(\theta)\right]+\dfrac{1}{LN}\Ee_\epsilon\left(\var_\xi\left[\epsilon\epsilon\trans\nabla_\theta f(\theta,\xi)\right]\right) \\
    &=\dfrac{1}{L}\Ee_\epsilon\left[\epsilon\epsilon\trans\nabla_\theta F(\theta)\nabla_\theta F(\theta)\trans\epsilon\epsilon\trans\right]-\dfrac{1}{L}\Ee_\epsilon\left[\epsilon\epsilon\trans\nabla_\theta F(\theta)\right]\Ee_\epsilon\left[\epsilon\epsilon\trans\nabla_\theta F(\theta)\right]\trans \\
    &\quad+\dfrac{1}{LN}\Ee_\epsilon\left(\epsilon\epsilon\trans\var_\xi\left[\nabla_\theta f(\theta,\xi)\right]\epsilon\epsilon\trans\right)
\end{align*}

Using Lemma \ref{lem:tracequartic}, as $c\to 0$,
\begin{align*}
    \trace\left(\var[\nabla_\theta \hat{F}^{FD}(\theta)]\right) &\to \dfrac{1}{L}\sigma^4(d+k-1)\trace(\nabla_\theta F(\theta)\nabla_\theta F(\theta)\trans)-\dfrac{1}{L}\trace(\sigma^4\nabla_\theta F(\theta)\nabla_\theta F(\theta)\trans) \\
    &\quad+\dfrac{1}{LN}\sigma^4(d+k-1)\trace(\var_\xi\left[\nabla_\theta f(\theta,\xi)\right]) \\
    &=\dfrac{\sigma^4}{L}(d+k-2)\|\nabla_\theta F(\theta)\|_2^2+\dfrac{\sigma^4}{LN}(d+k-1)\trace(\var_\xi\left[\nabla_\theta f(\theta,\xi)\right])
\end{align*}
Thus, the MSE of $\nabla_\theta \hat{F}^{FD}(\theta)$ is $((\sigma^2-1)^2+\dfrac{\sigma^4}{L}(d+k-2))\|\nabla_\theta F(\theta)\|_2^2+\dfrac{\sigma^4}{LN}(d+k-1)\trace(\var_\xi[\nabla_\theta f(\theta,\xi)])$.

\subsection{Proof of Theorem \ref{thm:gsshrinkage}}

We first make a change of variable.
Let $x=\sigma^2$. Then, \eqref{eq:gsminimize} becomes
\begin{align*}
    &\min_{x>0} O(x)\triangleq (x-1)^2+\dfrac{d+1}{L}x^2.
\end{align*}
Simplifying, $O(x)=(1+\dfrac{d+1}{L})x^2-2x+1$.
It is clear that $O(x)$ is a convex quadratic function, with minimum at $x^*=\dfrac{L}{L+d+1}>0$. Thus, $\sigma^{2*}=\dfrac{L}{L+d+1}$.

By definition, $\sigma^{2*}$ minimizes \eqref{eq:msepart1}. 
Since $\sigma^{2*}<1$ and $k=3$ for both $\nN(0,1)$ and $\nN(0,\sigma^{2*})$, it follows that the value of \eqref{eq:msepart2} is smaller for $\epsilon_{lj}\sim\nN(0,\sigma^{2*})$ than for $\epsilon_{lj}\sim\nN(0,1)$.
Hence, the MSE of $\nabla_\theta \hat{F}^{FD}(\theta)$ is smaller for $\epsilon_{lj}\sim\nN(0,\sigma^{2*})$ than for $\epsilon_{lj}\sim\nN(0,1)$.

\subsection{Proof of Theorem \ref{thm:bsshrinkage}}

We first make a change of variable. 
Let $x=p(1-p)$ and $y=m^2$. Then, \eqref{eq:bsminimize} becomes
\begin{align*}
    &\min_{x\in(0,0.25],y>0} O(x,y)\triangleq \left(\dfrac{x}{y}-1\right)^2+\dfrac{x^2}{y^2L}\left(d+1+\dfrac{1-6x}{x}\right).
\end{align*}
Simplifying,
\begin{align*}
    O(x,y) &= \dfrac{x^2}{y^2}-\dfrac{2x}{y}+1+\dfrac{x^2}{y^2}\dfrac{d-5}{L}+\dfrac{x}{y^2L} \\
    &=\dfrac{x^2}{y^2}\dfrac{L+d-5}{L}+x\dfrac{1-2yL}{y^2L}+1
\end{align*}
which is convex in $x$ for fixed $y$ if $L+d>5$. 
Next, we solve for $x$ in terms of $y$, obtaining $x^*=\dfrac{2yL-1}{2(L+d-5)}$ if $x$ were unconstrained.

However, since $x$ is constrained to $(0,0.25]$, there are three cases:
\begin{enumerate}
    \item If $y\leq\dfrac{1}{2L}$: The lower constraint is active. $x^*=0$ and $O(x^*,y)=1$.
    \item If $y>\dfrac{L+d-3}{4L}$: The upper constraint is active. $x^*=0.25$ and
    \begin{align*}
        O(x^*,y) &= 1+\dfrac{L+d-5}{16y^2L}+\dfrac{1-2yL}{4y^2L} = 1-\dfrac{1}{2y}+\dfrac{L+d-1}{16y^2L} \\
        &<1-\dfrac{1}{2y}+\dfrac{4yL+2}{16y^2L}=1-\dfrac{1}{4y}+\dfrac{1}{8y^2L}
    \end{align*}
    where the inequality in the second line follows from the condition on $y$.
    \item If $y>\dfrac{1}{2L}$ and $y\leq\dfrac{L+d-3}{4L}$: Neither constraint is active. $x^*=\dfrac{2yL-1}{2(L+d-5)}$ and
    \begin{align*}
        O(x^*,y) &= 1-\dfrac{(1-2yL)^2}{y^4L^2}\Bigg/\dfrac{4(L+d-5)}{y^2L} = 1-\dfrac{(1-2yL)^2}{4y^2L(L+d-5)} \\
        &\geq 1-\dfrac{(2yL-1)^2}{4y^2L(4yL-2)}=1-\dfrac{2yL-1}{8y^2L}=1-\dfrac{1}{4y}+\dfrac{1}{8y^2L}
    \end{align*}
    where the inequality in the second line follows from $y\leq\dfrac{L+d-3}{4L}$.
\end{enumerate}
In cases $2$ and $3$, $y>\dfrac{1}{2L}$, so $1-\dfrac{1}{4y}+\dfrac{1}{8y^2L}<1$.
Therefore, the smallest possible value of $O(x^*,y)$ occurs in case $2$, with $x^*=0.25$, $y>\dfrac{L+d-3}{4L}$, and $Q(y)=O(x^*,y)=1-\dfrac{1}{2y}+\dfrac{L+d-1}{16y^2L}$.

Finally, we minimize $Q(y)$ restricted to $y>\dfrac{L+d-3}{4L}$.
Observe that $Q(y)$ is a convex quadratic function of $\dfrac{1}{y}$, so $\dfrac{1}{y^*}=\dfrac{4L}{L+d-1}$, or $y^*=\dfrac{L+d-1}{4L}$.

Thus, $x^*=0.25$ and $y^*=\dfrac{L+d-1}{4L}$, corresponding to $p^*=0.5$ and $m^*=\sqrt{\dfrac{L+d-1}{4L}}$.

By definition, $p^*=0.5$ and $m^*=\sqrt{\dfrac{L+d-1}{4L}}$ minimizes \eqref{eq:msepart1}. 
The kurtosis of $\dfrac{B_p^*-p^*}{m^*}$ is the same as that of $\dfrac{B_{0.5}^*-0.5}{0.5}$; however, it has smaller variance since $m^*>0.5$.
Hence, the MSE of $\nabla_\theta\hat{F}^{FD}(\theta)$ is smaller for $\epsilon_{lj}\sim\dfrac{B_p^*-p^*}{m^*}$ than for $\epsilon_{lj}\sim\dfrac{B_{0.5}^*-0.5}{0.5}$.

\subsection{GS-shrinkage or BeS-shrinkage?}\label{sec:gsvsbsshrinkage}

To analyze whether GS-shrinkage or BeS-shrinkage is superior, we compare their MSEs for the gradient estimator \eqref{eq:fdempirical}, using Lemma \ref{thm:fdempmse}.
The MSE for GS-shrinkage is
\begin{align*}
    MSE(GSs) &= \left(\dfrac{(d+1)^2}{(L+d+1)^2}+\dfrac{L^2(d+1)}{L(L+d+1)^2}\right)\|\nabla_\theta F(\theta)\|_2^2+\dfrac{L^2(d+2)}{LN(L+d+1)^2}\trace(\var_\xi[\nabla_\theta f(\theta,\xi)]) \\
    &= \dfrac{(d+1)^2+L(d+1)}{(L+d+1)^2}\|\nabla_\theta F(\theta)\|_2^2+\dfrac{L(d+2)}{N(L+d+1)^2}\trace(\var_\xi[\nabla_\theta f(\theta,\xi)])
\end{align*}
The MSE for BeS-shrinkage is
\begin{align*}
    MSE(BeSs) &= \left(\left(1-\dfrac{4L}{4(L+d-1)}\right)^2+\dfrac{16L^2(d-1)}{16L(L+d-1)^2}\right)\|\nabla_\theta F(\theta)\|_2^2 \\
    &\quad+\dfrac{d(16L^2)}{16LN(L+d-1)^2}\trace(\var_\xi[\nabla_\theta f(\theta,\xi)]) \\
    &=\left(\dfrac{(d-1)^2}{(L+d-1)^2}+\dfrac{L(d-1)}{(L+d-1)^2}\right)\|\nabla_\theta F(\theta)\|_2^2+\dfrac{dL}{N(L+d-1)^2}\trace(\var_\xi[\nabla_\theta f(\theta,\xi)])
\end{align*}

Therefore,
\begin{align*}
    & MSE(GSs)-MSE(BeSs) \\
    &= \|\nabla_\theta F(\theta)\|_2^2\left(\dfrac{d+1}{L+d+1}-\dfrac{d-1}{L+d-1}\right)+\dfrac{\trace(\var_\xi[\nabla_\theta f(\theta,\xi)])L}{N}\left(\dfrac{d+2}{(L+d+1)^2}-\dfrac{d}{(L+d-1)^2}\right) \\
    &=\|\nabla_\theta F(\theta)\|_2^2\dfrac{(d+1)(L+d-1)-(d-1)(L+d+1)}{(L+d+1)(L+d-1)} \\
    &\quad+\trace(\var_\xi[\nabla_\theta f(\theta,\xi)])\dfrac{L}{N}\dfrac{(d+2)(L+d-1)^2-d(L+d+1)^2}{(L+d+1)^2(L+d-1)^2} \\
    &= \|\nabla_\theta F(\theta)\|_2^2\dfrac{2L}{(L+d+1)(L+d-1)}+\trace(\var_\xi[\nabla_\theta f(\theta,\xi)])\dfrac{2L}{N}\dfrac{L^2-2L+2-(d+1)^2}{(L+d+1)^2(L+d-1)^2} \\
    &= \dfrac{2L}{(L+d-1)(L+d+1)}\left(\|\nabla_\theta F(\theta)\|_2^2+\dfrac{L^2-2L+2-(d+1)^2}{N(L+d-1)(L+d+1)}\trace(\var_\xi[\nabla_\theta f(\theta,\xi)])\right)
\end{align*}

At the beginning of optimization, $\|\nabla_\theta F(\theta)\|_2^2$ is large.
If it is large compared to $\trace(\var_\xi[\nabla_\theta f(\theta,\xi)])/N$, since $L^2-2L+2-(d+1)^2<0$ for high-dimensional problems, $MSE(GSs)>MSE(BeSs)$.

At the end of optimization, $\|\nabla_\theta F(\theta)\|_2^2\approx 0$. 
Then, since $L^2-2L+2-(d+1)^2<0$ for high-dimensional problems, $MSE(GSs)<MSE(BeSs)$.

\subsection{Algorithms for the antithetic gradient estimator}\label{sec:antithetic}

Suppose that instead of the forward difference gradient estimator, we use the antithetic gradient estimator.
Mathematically, given $N$ samples from the oracle $\xi_i$ and $L$ IID sampled directions $\epsilon_l$, let 
\begin{align}\label{eq:atempirical}
    \nabla_\theta \hat{F}^{AT}(\theta) &= \dfrac{1}{2cLN}\sum_{l,i} (f(\theta+c\epsilon_l,\xi_i)-f(\theta-c\epsilon_l,\xi_i))\epsilon_l
\end{align}
Under Assumption \ref{assume:perturb}, we compute the MSE of $\nabla_\theta\hat{F}^{AT}(\theta)$, with the same strategy as for $\nabla_\theta\hat{F}^{FD}(\theta)$ (Lemma \ref{thm:fdempmse}).

\paragraph{Bias}
\begin{align*}
    \Ee[\nabla_\theta \hat{F}^{AT}(\theta)] &= \Ee_{\epsilon,\xi}\left[\dfrac{1}{2cLN}\sum_{l,i} (f(\theta+c\epsilon_l,\xi_i)-f(\theta-c\epsilon_l,\xi_i))\epsilon_l\right] \\
    &= \Ee_\epsilon\left[\dfrac{1}{2cL}\sum_l (F(\theta+c\epsilon_l)-F(\theta-c\epsilon_l))\epsilon_l\right] \\
    &= \Ee_\epsilon\left[\dfrac{1}{2c}(F(\theta+c\epsilon)-F(\theta-c\epsilon_l))\epsilon\right] \\
    &= \Ee_\epsilon\left[\dfrac{1}{2c}(2c\nabla_\theta F(\theta)\trans\epsilon+c^3\sum_{|\alpha|_1=3} h_\alpha(\theta+c\epsilon)\epsilon^\alpha)\epsilon\right]
\end{align*}
where the last equality follows from Taylor's Theorem, $\alpha$ is a $d$-dimensional vector of non-negative integers, $\epsilon^\alpha$ indicates element-wise power, and $h_\alpha$ is some scalar-valued function such that $h_\alpha(y)\to 0$ as $y\to\theta$.
\begin{align*}
    \Ee[\nabla_\theta \hat{F}^{AT}(\theta)] &= \Ee_\epsilon\left[\epsilon(\epsilon\trans\nabla_\theta F(\theta)+c^2\sum_{|\alpha|_1=3} h_\alpha(\theta+c\epsilon)\epsilon^\alpha)\right] \\
    &= \sigma^2\nabla_\theta F(\theta)+c^2\Ee_\epsilon[\epsilon\sum_{|\alpha|_1=3} h_\alpha(\theta+c\epsilon)\epsilon^\alpha]
\end{align*}
where $\sigma^2$ is the variance of each entry of $\epsilon$.
If the Dominated Convergence Theorem \cite{billingsley1995probability} holds, then as $c\to 0$,
\begin{align*}
    \Ee[\nabla_\theta \hat{F}^{AT}(\theta)] &\to \sigma^2\nabla_\theta F(\theta),
\end{align*}
and the squared norm of the bias of $\nabla_\theta \hat{F}^{AT}(\theta)$ is the same as that of $\nabla_\theta \hat{F}^{FD}(\theta)$.

\paragraph{Variance}

Using the law of total variance \cite{billingsley1995probability},
\begin{align*}
    \var[\nabla_\theta \hat{F}^{AT}(\theta)] &= \var_\epsilon(\Ee_\xi[\nabla_\theta \hat{F}^{AT}(\theta)\mid\epsilon])+\Ee_\epsilon(\var_\xi[\nabla_\theta \hat{F}^{AT}(\theta)\mid\epsilon]) \\
    &=\var_\epsilon\left[\dfrac{1}{2cL}\sum_l(F(\theta+c\epsilon_l)-F(\theta-c\epsilon_l))\epsilon_l\right] \\
    &\quad+\Ee_\epsilon\left(\dfrac{1}{4c^2L}\var_\xi\left[\dfrac{1}{N}\sum_i(f(\theta+c\epsilon,\xi_i)-f(\theta-c\epsilon,\xi_i))\epsilon\mid\epsilon\right]\right) \\
    &=\dfrac{1}{4c^2L}\var_\epsilon\left[(F(\theta+c\epsilon)-F(\theta-c\epsilon))\epsilon\right]+\dfrac{1}{4c^2LN}\Ee_\epsilon\left(\var_\xi\left[(f(\theta+c\epsilon,\xi)-f(\theta-c\epsilon,\xi))\epsilon\mid\epsilon\right]\right) \\
    &=\dfrac{1}{4c^2L}\var_\epsilon\left[\epsilon(2c\epsilon\trans\nabla_\theta F(\theta)+c^2\sum_{|\alpha|_1=3} h_\alpha(\theta+c\epsilon)\epsilon^\alpha)\right] \\
    &\quad+\dfrac{1}{4c^2LN}\Ee_\epsilon\left(\var_\xi\left[\epsilon(2c\epsilon\trans\nabla_\theta f(\theta,\xi)+c^2\sum_{|\alpha|_1=3} h'_\alpha(\theta+c\epsilon)\epsilon^\alpha)\right]\right) \\
    &=\dfrac{1}{L}\var_\epsilon\left[\epsilon(\epsilon\trans\nabla_\theta F(\theta)+c\sum_{|\alpha|_1=3} h_\alpha(\theta+c\epsilon)\epsilon^\alpha)\right]\\
    &\quad+\dfrac{1}{LN}\Ee_\epsilon\left(\var_\xi\left[\epsilon(\epsilon\trans\nabla_\theta f(\theta,\xi)+c\sum_{|\alpha|_1=3} h'_\alpha(\theta+c\epsilon)\epsilon^\alpha)\right]\right)
\end{align*}
where the next-to-last equality follows from Taylor's Theorem and $h'$ is some scalar-valued function with the same condition as $h$.
Assuming that the Dominated Convergence Theorem holds again, as $c\to 0$, the limit of $\var[\nabla_\theta \hat{F}^{AT}(\theta)]$ is the same as that of $\var[\nabla_\theta \hat{F}^{FD}(\theta)]$.

Thus, the MSE of $\nabla_\theta \hat{F}^{AT}(\theta)$ is the same as that of $\var[\nabla_\theta \hat{F}^{FD}(\theta)]$.
It follows that any algorithm to minimize the MSE of $\nabla_\theta \hat{F}^{AT}(\theta)$ must be the same as an algorithm to minimize the MSE of $\nabla_\theta \hat{F}^{FD}(\theta)$.

\section{Experimental details}\label{sec:expdetails}
\subsection{Validating theory on linear regression}\label{sec:linregdetails}

\mypara{Computing the gradient of the objective}
The objective is the squared error loss. For our data model \eqref{eq:linregmodel}, it is
\begin{align*}
    F(\theta) &= \Ee[(y-\theta\trans x)^2/2] = \Ee[(\gamma\trans x-\theta\trans x+\epsilon)^2/2] \\
    &=\Ee[\gamma\trans xx\trans\gamma/2+\theta\trans xx\trans\theta/2+\epsilon^2/2-\gamma\trans xx\trans\theta-\theta\trans x\epsilon+\gamma\trans x\epsilon] \\\
    &\equiv \Ee[\theta\trans xx\trans\theta/2-\gamma\trans xx\trans\theta-\theta\trans x\epsilon] \\
    &=\Ee[\theta\trans \QQ\theta/2-\gamma\trans\QQ\theta] = \theta\trans\Ee[\QQ]\theta/2-\Ee[\gamma\trans\QQ]\theta
\end{align*}
where in the third line we have ignored terms that do not include $\theta$.

The gradient of the objective is $\nabla_\theta F(\theta) = \Ee(\QQ)\theta-\Ee(\QQ\gamma)=\theta-\Ee(\QQ\gamma)$, since
\begin{align*}
    \Ee(\QQ) &= \Ee_\VV[\Ee_\gamma(\VV\diag(\gamma)\VV\trans\mid\VV)]=\Ee_\VV[\VV\Ee_\gamma(\diag(\gamma))\VV\trans] \\
    &=\Ee_\VV[\VV\VV\trans] = \II \quad \text{since $\Ee(\gamma)$ is a vector of ones $1_d$ and $\VV$ is orthogonal}
\end{align*}

Prior to the start of optimization, $\Ee(\QQ\gamma)$ is estimated via Monte Carlo with $1000$ samples. 
The estimate is plugged into $\nabla_\theta F(\theta)$ to obtain the gradient of the objective.

\mypara{Optimization and testing}
We first sample $1000$ data points from \eqref{eq:linregmodel} to serve as the test set and initialize the parameters $\theta$ by sampling from $\nN(0,\II)$.
There are $100$ rounds. 
Each round consists of i) $10$ optimization iterations of SGD with the gradient estimated from \eqref{eq:fdempirical} on $N$ newly sampled data points from \eqref{eq:linregmodel} and $L$ newly sampled directions $\epsilon_l$ ii) computation of the squared error loss over the test set.
The MSE of the gradient estimate is computed at each iteration and the average is taken over the $10$ iterations per round.
Note that $f(\theta,\xi_i)$ is the squared error loss on data point $i$.

\mypara{Hyperparameter search}
We ran this experiment for $L=\{2, 6, 20\}$ and $N=\{5, 15, 50\}$, and a selection was shown in the main paper due to space constraints.
Hyperparameters are the spacing $c$, chosen from $\{0.01, 0.1\}$, and the SGD learning rate $\eta$, chosen from $\{0.001, 0.01, 0.1\}$. 
The values chosen are the ones that minimize the test loss at the end of the $100$ rounds, averaged over $3$ randomly generated seeds different from those used in Figure \ref{fig:linreg}.
Tables \ref{tab:linreg_hyper} and \ref{tab:linregshrinkage_hyper} show the chosen hyperparameters for each algorithm and combination of $L$ and $N$.

\begin{table}[t]
\caption{Hyperparameters for GS and BeS in linear regression.}
\label{tab:linreg_hyper}
\vskip 0.1in
\begin{center}
\begin{small}
\begin{sc}
\begin{tabular}{lccr}
\toprule
$(L,N)$ & $c$ & learning rate \\
\midrule
$(2,5)$ & $0.01$ & $0.001$  \\ 
$(6,5)$ & $0.01$ & $0.001$ \\ 
$(20,5)$ & $0.01$ & $0.001$ \\ 
$(2,15)$ & $0.01$ & $0.001$ \\ 
$(6,15)$ & $0.01$ & $0.001$ \\ 
$(20,15)$ & $0.01$ & $0.01$ \\ 
$(2,50)$ & $0.01$ & $0.001$ \\ 
$(6,50)$ & $0.01$ & $0.01$ \\ 
$(20,50)$ & $0.01$ & $0.01$ \\ 
\bottomrule
\end{tabular}
\end{sc}
\end{small}
\quad
\begin{small}
\begin{sc}
\begin{tabular}{lccr}
\toprule
$(L,N)$ & $c$ & learning rate \\
\midrule
$(2,5)$ & $0.01$ & $0.001$ \\ 
$(6,5)$ & $0.1$ & $0.001$ \\ 
$(20,5)$ & $0.01$ & $0.001$ \\ 
$(2,15)$ & $0.01$ & $0.001$ \\ 
$(6,15)$ & $0.1$ & $0.001$ \\ 
$(20,15)$ & $0.01$ & $0.01$ \\ 
$(2,50)$ & $0.01$ & $0.001$ \\ 
$(6,50)$ & $0.01$ & $0.01$ \\ 
$(20,50)$ & $0.01$ & $0.01$ \\ 
\bottomrule
\end{tabular}
\end{sc}
\end{small}
\end{center}
\vskip -0.1in
\end{table}

\begin{table}[t]
\caption{Hyperparameters for GS-shrinkage and BeS-shrinkage in linear regression.}
\label{tab:linregshrinkage_hyper}
\vskip 0.1in
\begin{center}
\begin{small}
\begin{sc}
\begin{tabular}{lccr}
\toprule
$(L,N)$ & $c$ & learning rate \\
\midrule
$(2,5)$ & $0.01$ & $0.01$ \\ 
$(6,5)$ & $0.1$ & $0.01$ \\ 
$(20,5)$ & $0.01$ & $0.01$ \\ 
$(2,15)$ & $0.01$ & $0.1$ \\ 
$(6,15)$ & $0.01$ & $0.1$ \\ 
$(20,15)$ & $0.1$ & $0.01$ \\ 
$(2,50)$ & $0.01$ & $0.1$ \\ 
$(6,50)$ & $0.1$ & $0.1$ \\ 
$(20,50)$ & $0.01$ & $0.1$ \\ 
\bottomrule
\end{tabular}
\end{sc}
\end{small}
\quad
\begin{small}
\begin{sc}
\begin{tabular}{lccr}
\toprule
$(L,N)$ & $c$ & learning rate \\
\midrule
$(2,5)$ & $0.01$ & $0.01$ \\ 
$(6,5)$ & $0.1$ & $0.01$ \\ 
$(20,5)$ & $0.01$ & $0.01$ \\ 
$(2,15)$ & $0.01$ & $0.1$ \\ 
$(6,15)$ & $0.1$ & $0.1$ \\ 
$(20,15)$ & $0.1$ & $0.01$ \\ 
$(2,50)$ & $0.01$ & $0.1$ \\ 
$(6,50)$ & $0.01$ & $0.1$ \\ 
$(20,50)$ & $0.01$ & $0.1$ \\ 
\bottomrule
\end{tabular}
\end{sc}
\end{small}
\end{center}
\vskip -0.1in
\end{table}

\mypara{Full results}
Figures \ref{fig:linregmse_full} and \ref{fig:linregloss_full} contain the complete results for the MSE of the gradient estimate and test loss, respectively.
We see that in all cases, the MSE of the gradient is substantially smaller for GS-shrinkage and BeS-shrinkage than GS and BeS.
The story is less clear for the test loss, but in $5$ out of $9$ cases the test loss of GS-shrinkage and BeS-shrinkage is lower than that of GS and BeS at the end of optimization.
See Section \ref{sec:validation} for further discussion.

\begin{figure}[ht]
\vskip 0.1in
    \begin{center}
        \includegraphics[width=0.6\textwidth]{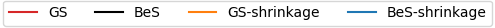}
    \end{center}%
    \begin{subfigure}[b]{0.33\textwidth}
         \includegraphics[width=\textwidth]{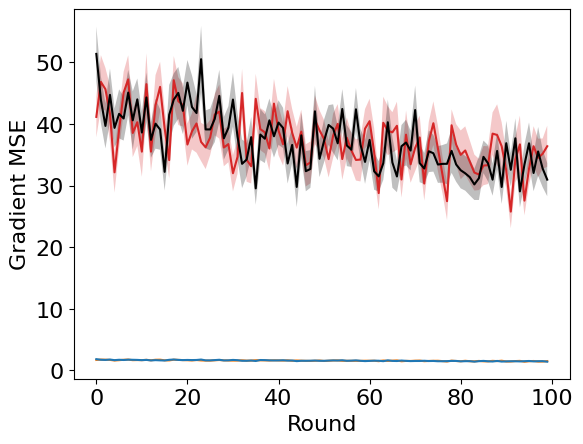}
         \caption{$L=2, N=5$}
    \end{subfigure}
    \hfill
    \begin{subfigure}[b]{0.33\textwidth}
         \includegraphics[width=\textwidth]{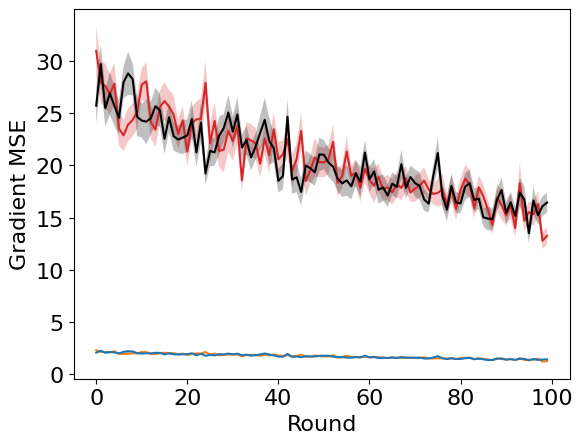}
         \caption{$L=6, N=5$}
    \end{subfigure}
    \hfill
    \begin{subfigure}[b]{0.33\textwidth}
         \includegraphics[width=\textwidth]{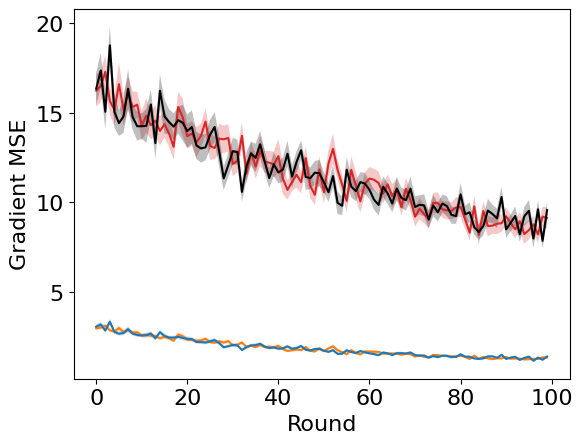}
         \caption{$L=20, N=5$}
    \end{subfigure}%
    
    \begin{subfigure}[b]{0.33\textwidth}
         \includegraphics[width=\textwidth]{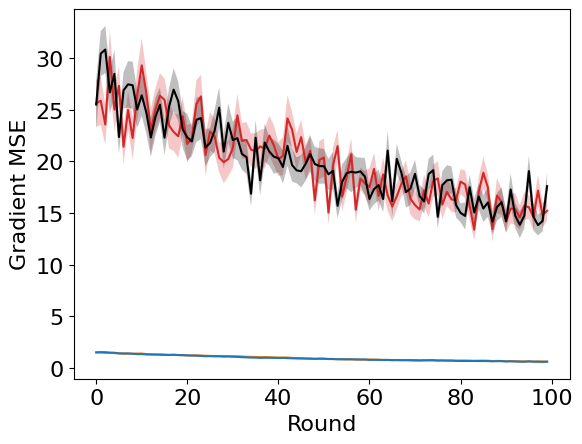}
         \caption{$L=2, N=15$}
    \end{subfigure}
    \hfill
    \begin{subfigure}[b]{0.33\textwidth}
         \includegraphics[width=\textwidth]{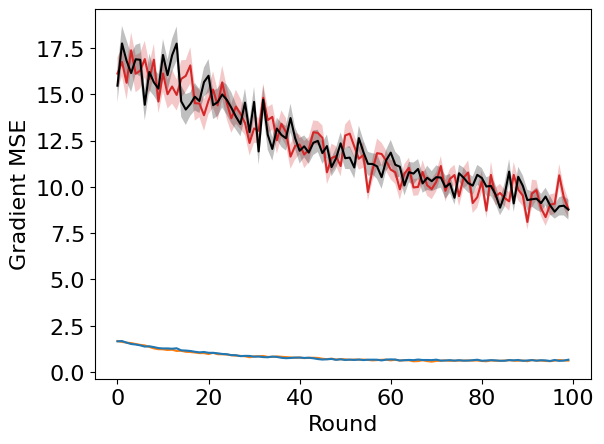}
         \caption{$L=6, N=15$}
    \end{subfigure}
    \hfill
    \begin{subfigure}[b]{0.33\textwidth}
         \includegraphics[width=\textwidth]{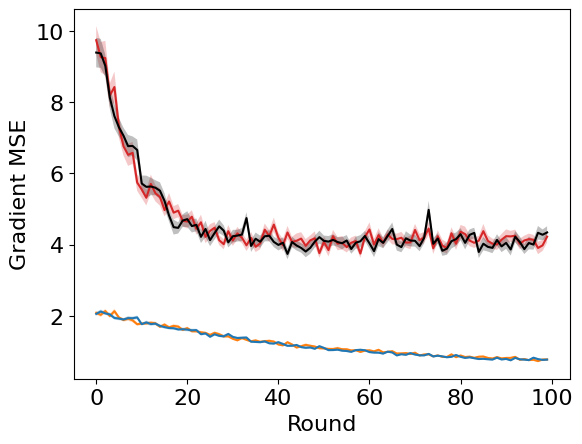}
         \caption{$L=20, N=15$}
    \end{subfigure}%
    
    \begin{subfigure}[b]{0.33\textwidth}
         \includegraphics[width=\textwidth]{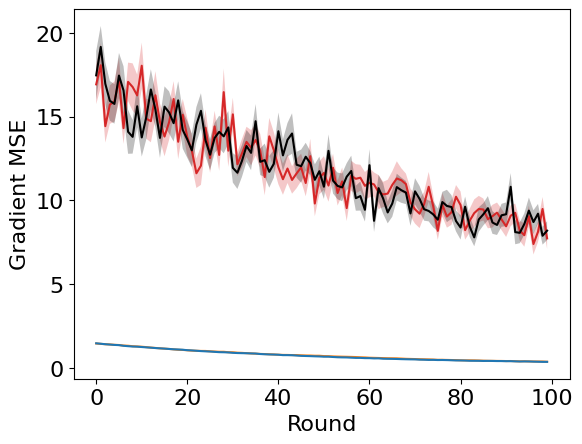}
         \caption{$L=2, N=50$}
    \end{subfigure}
    \hfill
    \begin{subfigure}[b]{0.33\textwidth}
         \includegraphics[width=\textwidth]{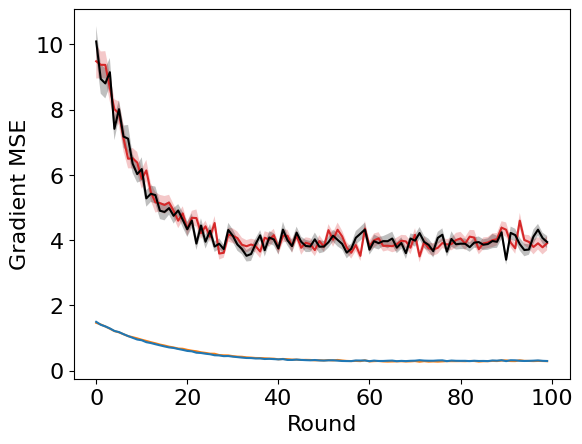}
         \caption{$L=6, N=50$}
    \end{subfigure}
    \hfill
    \begin{subfigure}[b]{0.33\textwidth}
         \includegraphics[width=\textwidth]{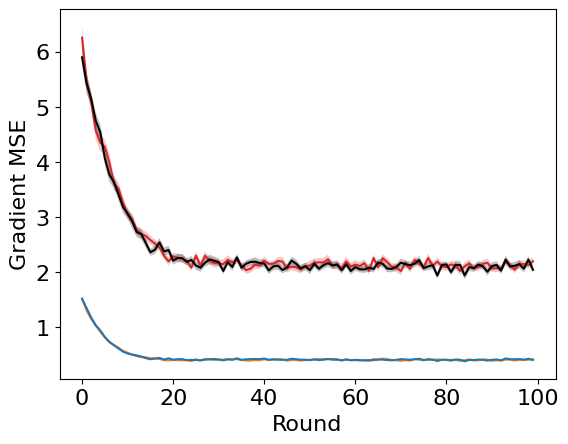}
         \caption{$L=20, N=50$}
    \end{subfigure}
\caption{For linear regression with various $L$ and $N$: MSE of the gradient at each round averaged over the $10$ iterations.}
\label{fig:linregmse_full}
\vskip -0.1in
\end{figure}

\begin{figure}[ht]
\vskip 0.1in
    \begin{center}
        \includegraphics[width=0.6\textwidth]{linregfigs/legend.png}
    \end{center}%
    \begin{subfigure}[b]{0.33\textwidth}
         \includegraphics[width=\textwidth]{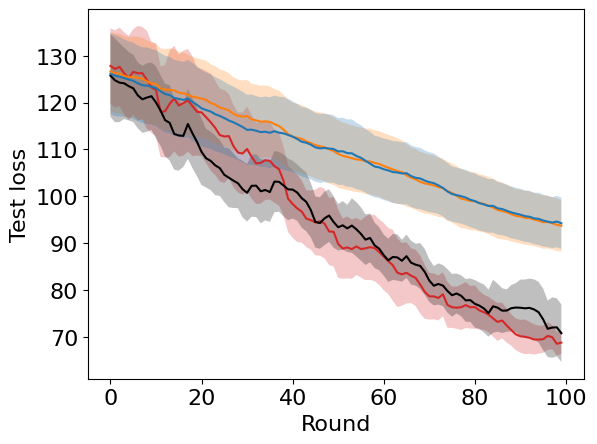}
         \caption{$L=2, N=5$}
    \end{subfigure}
    \hfill
    \begin{subfigure}[b]{0.33\textwidth}
         \includegraphics[width=\textwidth]{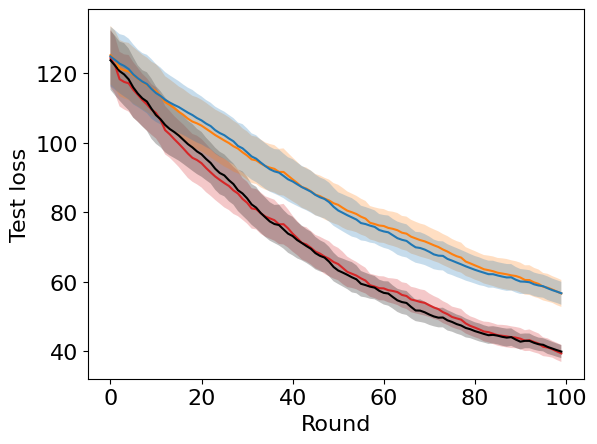}
         \caption{$L=6, N=5$}
    \end{subfigure}
    \hfill
    \begin{subfigure}[b]{0.33\textwidth}
         \includegraphics[width=\textwidth]{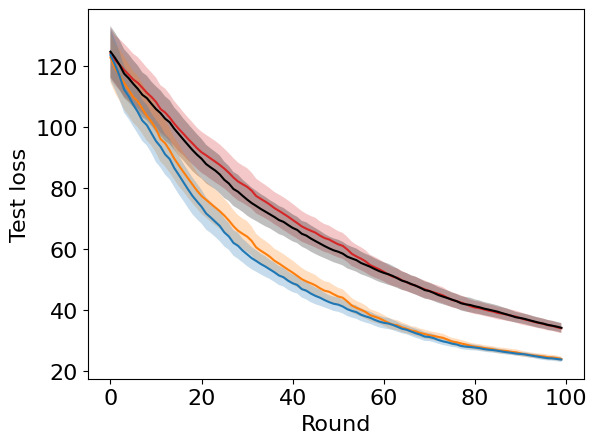}
         \caption{$L=20, N=5$}
    \end{subfigure}%
    
    \begin{subfigure}[b]{0.33\textwidth}
         \includegraphics[width=\textwidth]{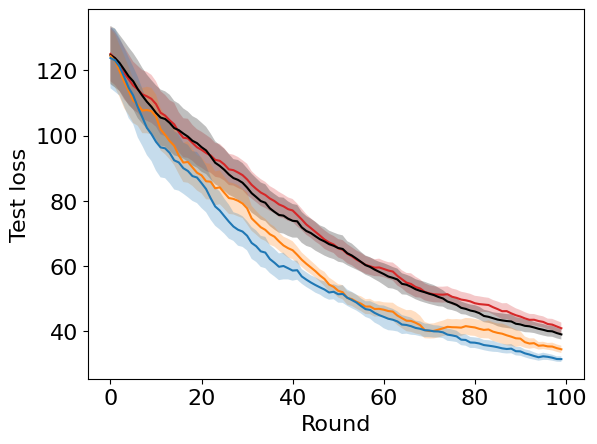}
         \caption{$L=2, N=15$}
    \end{subfigure}
    \hfill
    \begin{subfigure}[b]{0.33\textwidth}
         \includegraphics[width=\textwidth]{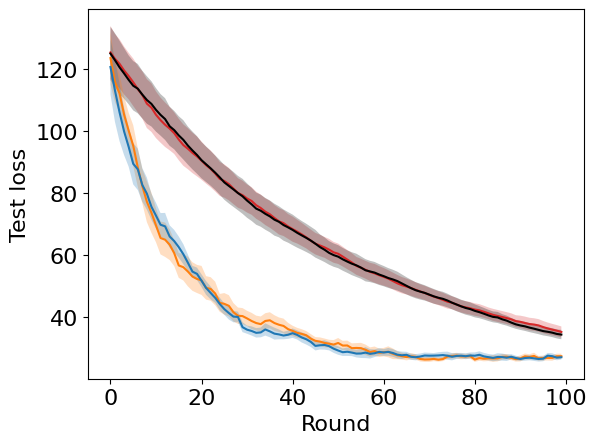}
         \caption{$L=6, N=15$}
    \end{subfigure}
    \hfill
    \begin{subfigure}[b]{0.33\textwidth}
         \includegraphics[width=\textwidth]{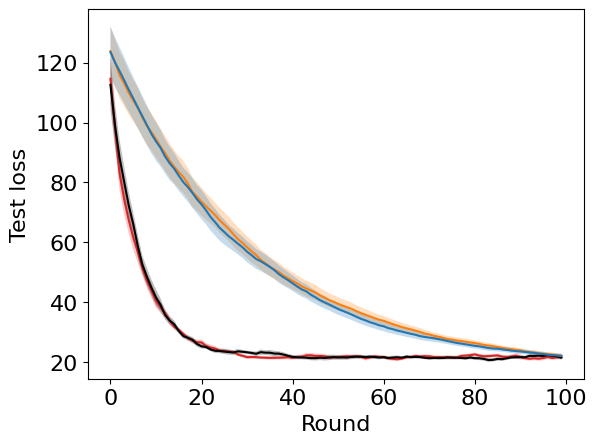}
         \caption{$L=20, N=15$}
    \end{subfigure}%
    
    \begin{subfigure}[b]{0.33\textwidth}
         \includegraphics[width=\textwidth]{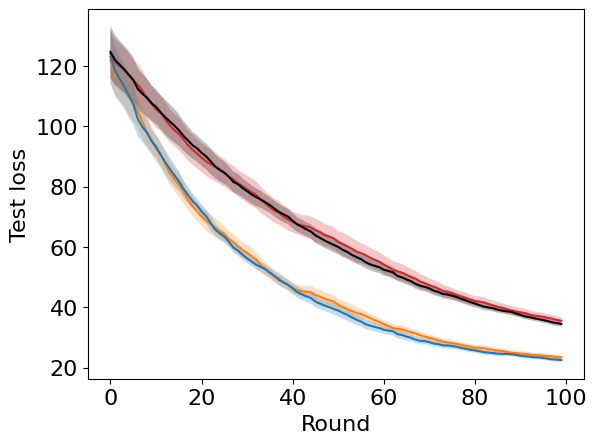}
         \caption{$L=2, N=50$}
    \end{subfigure}
    \hfill
    \begin{subfigure}[b]{0.33\textwidth}
         \includegraphics[width=\textwidth]{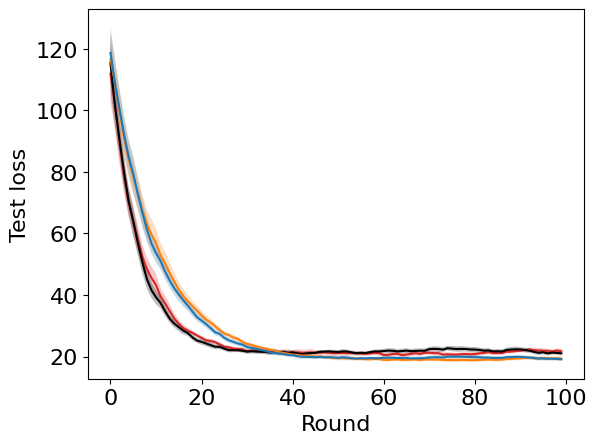}
         \caption{$L=6, N=50$}
    \end{subfigure}
    \hfill
    \begin{subfigure}[b]{0.33\textwidth}
         \includegraphics[width=\textwidth]{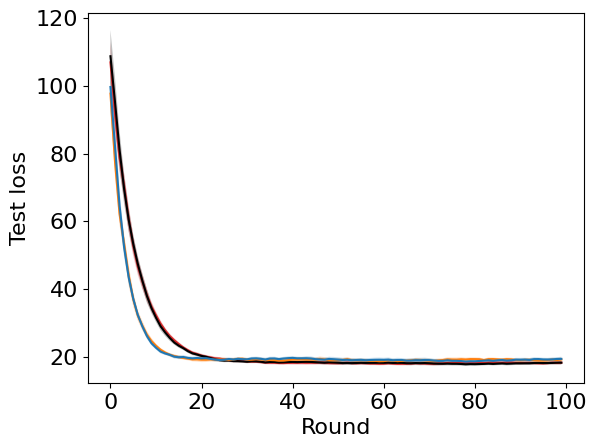}
         \caption{$L=20, N=50$}
    \end{subfigure}
\caption{For linear regression with various $L$ and $N$: Test loss at each round.}
\label{fig:linregloss_full}
\vskip -0.1in
\end{figure}

\subsection{Comparing to baselines on RL}\label{sec:rldetails}

\mypara{Baselines}
Orthogonal ES is the same as GS, but with an application of the Gram-Schmidt process to the directions after they are sampled.
Guided ES samples directions from the distribution $\nN(0,\mathbf{\Sigma})$, where $\mathbf{\Sigma}=\dfrac{\alpha}{d}\II+\dfrac{1-\alpha}{k}\UU\UU\trans$ and $\UU$ is an orthonormal basis for the $k$ previous gradient estimates; computing the basis also requires the Gram-Schmidt process.
Following recommendations in \citet{guidedes} and \citet{lmrs}, we set $\alpha=0.5$ and $k=50$ ($k=10$ when $d<50$) and let $\alpha=1$ for the first $k$ iterations.

\mypara{Optimization and testing}
Our code roughly follows the same structure as \citet{mania2018simple}, parallelizing trajectory generation and standardizing the observations. 
The parameters of the linear policy $\theta$ is initialized at zero. 
There are $100$ rounds, each consisting of $10$ optimization iterations and one test step.
In more detail, every optimization iteration has the following steps:
\begin{enumerate}
    \item Sample $L$ directions $\epsilon_l$.
    \item For each direction, reinitialize the environment, generate one trajectory using the parameters $\theta+c\epsilon_l$ and another using the parameters $\theta$. For those environments with a survival bonus, remove it.
    \item Using the $2L$ rewards, compute the gradient estimate \eqref{eq:fdempirical} with $N=1$, dividing by the standard deviation of the rewards.
    \item Take a gradient ascent step on $\theta$ with learning rate $\eta$.
\end{enumerate}
and each test step has the following steps:
\begin{enumerate}
    \item For $1000$ trials: Reinitialize the environment and generate a trajectory. Record the total reward.
    \item Compute the average and standard deviation of the reward over the trials.
\end{enumerate}

\mypara{Hyperparameter search}
Ant and Walker2d have horizon $1000$, ML1-Reach $150$, and HalfCheetahRandVel $200$.
We ran this experiment for $L=\{2, 6, 20\}$.
Hyperparameters are the spacing $c$, chosen from $\{0.01,0.1\}$, and the learning rate $\eta$, chosen from $\{0.0001, 0.001, 0.01\}$.
The values chosen are the ones that maximize the test reward at the end of the $100$ rounds, averaged over $3$ randomly generated seeds different from those used in Figure \ref{fig:rl}.
Tables \ref{tab:ant_hyper} -- \ref{tab:ml1reach_hyper} show the chosen hyperparameters for each algorithm in the four environments discussed in the main paper.

\begin{table}[t]
\caption{$c$ and learning rate for Ant.}
\label{tab:ant_hyper}
\vskip 0.1in
\begin{center}
\begin{small}
\begin{sc}
\begin{tabular}{lcccr}
\toprule
Algorithm & $L=2$ & $L=6$ & $L=20$ \\ 
\midrule
GS & $0.1$ & $0.1$ & $0.1$ \\ 
BeS & $0.1$ & $0.1$ & $0.1$ \\ 
GS-shrinkage & $0.1$ & $0.1$ & $0.01$ \\ 
BeS-shrinkage & $0.1$ & $0.01$ & $0.01$ \\ 
Orthogonal ES & $0.1$ & $0.1$ & $0.1$ \\ 
Guided ES & $0.1$ & $0.1$ & $0.1$ \\ 
\bottomrule
\end{tabular}
\end{sc}
\end{small}
\quad
\begin{small}
\begin{sc}
\begin{tabular}{lcccr}
\toprule
Algorithm & $L=2$ & $L=6$ & $L=20$ \\ %
\midrule
GS & $0.0001$ & $0.0001$ & $0.0001$ \\ %
BeS & $0.0001$ & $0.0001$ & $0.0001$ \\ %
GS-shrinkage & $0.001$ & $0.001$ & $0.0001$ \\ %
BeS-shrinkage & $0.001$ & $0.0001$ & $0.0001$ \\ %
Orthogonal ES & $0.0001$ & $0.0001$ & $0.0001$ \\ %
Guided ES & $0.001$ & $0.001$ & $0.001$ \\ %
\bottomrule
\end{tabular}
\end{sc}
\end{small}
\end{center}
\vskip -0.1in
\end{table}

\begin{table}[t]
\caption{$c$ and learning rate for Walker2d.}
\label{tab:walker_hyper}
\vskip 0.1in
\begin{center}
\begin{small}
\begin{sc}
\begin{tabular}{lcccr}
\toprule
Algorithm & $L=2$ & $L=6$ & $L=20$ \\ 
\midrule
GS & $0.01$ & $0.01$ & $0.01$ \\ 
BeS & $0.1$ & $0.01$ & $0.01$ \\ 
GS-shrinkage & $0.1$ & $0.01$ & $0.01$ \\ 
BeS-shrinkage & $0.1$ & $0.01$ & $0.1$ \\ 
Orthogonal ES & $0.01$ & $0.01$ & $0.01$ \\ 
Guided ES & $0.1$ & $0.1$ & $0.01$ \\ 
\bottomrule
\end{tabular}
\end{sc}
\end{small}
\quad
\begin{small}
\begin{sc}
\begin{tabular}{lcccr}
\toprule
Algorithm & $L=2$ & $L=6$ & $L=20$ \\ %
\midrule
GS & $0.0001$ & $0.0001$ & $0.0001$ \\ %
BeS & $0.001$ & $0.0001$ & $0.0001$ \\ %
GS-shrinkage & $0.0001$ & $0.0001$ & $0.0001$ \\ %
BeS-shrinkage & $0.001$ & $0.0001$ & $0.01$ \\ %
Orthogonal ES & $0.0001$ & $0.0001$ & $0.0001$ \\ %
Guided ES & $0.001$ & $0.001$ & $0.01$ \\ %
\bottomrule
\end{tabular}
\end{sc}
\end{small}
\end{center}
\vskip -0.1in
\end{table}

\begin{table}[t]
\caption{$c$ and learning rate for HalfCheetahRandVel.}
\label{tab:cheetahvel_hyper}
\vskip 0.1in
\begin{center}
\begin{small}
\begin{sc}
\begin{tabular}{lcccr}
\toprule
Algorithm & $L=2$ & $L=6$ & $L=20$ \\ 
\midrule
GS & $0.1$ & $0.01$ & $0.01$ \\ 
BeS & $0.01$ & $0.01$ & $0.01$ \\ 
GS-shrinkage & $0.1$ & $0.1$ & $0.01$ \\ 
BeS-shrinkage & $0.1$ & $0.1$ & $0.01$  \\ 
Orthogonal ES & $0.1$ & $0.01$ & $0.01$ \\ 
Guided ES & $0.1$ & $0.1$ & $0.1$ \\ 
\bottomrule
\end{tabular}
\end{sc}
\end{small}
\quad
\begin{small}
\begin{sc}
\begin{tabular}{lcccr}
\toprule
Algorithm & $L=2$ & $L=6$ & $L=20$ \\ %
\midrule
GS & $0.001$ & $0.0001$ & $0.0001$ \\ %
BeS & $0.0001$ & $0.0001$ & $0.0001$ \\ %
GS-shrinkage & $0.001$ & $0.01$ & $0.001$ \\ %
BeS-shrinkage & $0.001$ & $0.01$ & $0.001$ \\ %
Orthogonal ES & $0.001$ & $0.0001$ & $0.0001$ \\ %
Guided ES & $0.001$ & $0.01$ & $0.01$ \\ %
\bottomrule
\end{tabular}
\end{sc}
\end{small}
\end{center}
\vskip -0.1in
\end{table}

\begin{table}[t]
\caption{$c$ and learning rate for ML1-Reach.}
\label{tab:ml1reach_hyper}
\vskip 0.1in
\begin{center}
\begin{small}
\begin{sc}
\begin{tabular}{lcccr}
\toprule
Algorithm & $L=2$ & $L=6$ & $L=20$ \\ 
\midrule
GS & $0.1$ & $0.1$ & $0.01$ \\ 
BeS & $0.1$ & $0.1$ & $0.1$ \\ 
GS-shrinkage & $0.1$ & $0.1$ & $0.1$ \\ 
BeS-shrinkage & $0.1$ & $0.1$ & $0.01$ \\ 
Orthogonal ES & $0.1$ & $0.1$ & $0.1$ \\ 
Guided ES & $0.1$ & $0.1$ & $0.1$ \\ 
\bottomrule
\end{tabular}
\end{sc}
\end{small}
\quad
\begin{small}
\begin{sc}
\begin{tabular}{lcccr}
\toprule
Algorithm & $L=2$ & $L=6$ & $L=20$ \\ %
\midrule
GS & $0.001$ & $0.001$ & $0.0001$ \\ %
BeS & $0.001$ & $0.001$ & $0.01$ \\ %
GS-shrinkage & $0.001$ & $0.001$ & $0.01$ \\ %
BeS-shrinkage & $0.001$ & $0.001$ & $0.0001$ \\ %
Orthogonal ES & $0.001$ & $0.001$ & $0.01$ \\ %
Guided ES & $0.001$ & $0.001$ & $0.001$ \\ %
\bottomrule
\end{tabular}
\end{sc}
\end{small}
\end{center}
\vskip -0.1in
\end{table}

\begin{table}[t]
\caption{$c$ and learning rate for Hopper.}
\label{tab:hopper_hyper}
\vskip 0.1in
\begin{center}
\begin{small}
\begin{sc}
\begin{tabular}{lcccr}
\toprule
Algorithm & $L=2$ & $L=6$ & $L=20$ \\ 
\midrule
GS & $0.1$ & $0.01$ & $0.01$ \\ 
BeS & $0.1$ & $0.01$ & $0.01$ \\ 
GS-shrinkage & $0.1$ & $0.1$ & $0.01$ \\ 
BeS-shrinkage & $0.1$ & $0.1$ & $0.01$ \\ 
Orthogonal ES & $0.1$ & $0.01$ & $0.01$ \\ 
Guided ES & $0.1$ & $0.1$ & $0.1$ \\ 
\bottomrule
\end{tabular}
\end{sc}
\end{small}
\quad
\begin{small}
\begin{sc}
\begin{tabular}{lcccr}
\toprule
Algorithm & $L=2$ & $L=6$ & $L=20$ \\ %
\midrule
GS & $0.001$ & $0.0001$ & $0.0001$ \\ %
BeS & $0.001$ & $0.0001$ & $0.0001$ \\ %
GS-shrinkage & $0.001$ & $0.01$ & $0.0001$ \\ %
BeS-shrinkage & $0.001$ & $0.001$ & $0.0001$ \\ %
Orthogonal ES & $0.001$ & $0.0001$ & $0.0001$ \\ %
Guided ES & $0.01$ & $0.01$ & $0.01$ \\ %
\bottomrule
\end{tabular}
\end{sc}
\end{small}
\end{center}
\vskip -0.1in
\end{table}

\begin{table}[t]
\caption{$c$ and learning rate for ML1-Push.}
\label{tab:ml1push_hyper}
\vskip 0.1in
\begin{center}
\begin{small}
\begin{sc}
\begin{tabular}{lcccr}
\toprule
Algorithm & $L=2$ & $L=6$ & $L=20$ \\ 
\midrule
GS & $0.1$ & $0.1$ & $0.1$ \\ 
BeS & $0.1$ & $0.1$ & $0.1$ \\ 
GS-shrinkage & $0.1$ & $0.1$ & $0.1$ \\ 
BeS-shrinkage & $0.1$ & $0.1$ & $0.1$ \\ 
Orthogonal ES & $0.1$ & $0.1$ & $0.1$ \\ 
Guided ES & $0.1$ & $0.1$ & $0.1$ \\ 
\bottomrule
\end{tabular}
\end{sc}
\end{small}
\quad
\begin{small}
\begin{sc}
\begin{tabular}{lcccr}
\toprule
Algorithm & $L=2$ & $L=6$ & $L=20$ \\ %
\midrule
GS & $0.001$ & $0.001$ & $0.01$ \\ %
BeS & $0.001$ & $0.001$ & $0.01$ \\ %
GS-shrinkage & $0.001$ & $0.001$ & $0.01$ \\ %
BeS-shrinkage & $0.001$ & $0.001$ & $0.01$ \\ %
Orthogonal ES & $0.001$ & $0.001$ & $0.01$ \\ %
Guided ES & $0.001$ & $0.001$ & $0.01$ \\ %
\bottomrule
\end{tabular}
\end{sc}
\end{small}
\end{center}
\vskip -0.1in
\end{table}

\mypara{Additional environments}
We conducted the above experiment on two additional environments, Hopper and ML1-Push.
Hopper is another locomotion environment, similar to Ant and Walker2d, and ML1-Push is a meta-RL manipulation environment similar to ML1-Reach, where the goal is to push an object to some location.
The selected hyperparameters are given in Tables \ref{tab:hopper_hyper} and \ref{tab:ml1push_hyper} and the plots of the test reward against the number of generated trajectories during optimization are given in Figure \ref{fig:rl2}.
When $L=6$, for Hopper Guided ES is the best algorithm, while for ML1-Push BeS, GS-shrinkage, BeS-shrinkage, and Orthogonal ES perform similarly.
For $L=2$ and $L=20$, GS appears to outperform the other algorithms. We suspect that a possible reason is that standardizing the rewards during optimization and searching over a grid of learning rates compensates for errors in the magnitude of the gradient estimate, and thus gives a bigger advantage to GS.

\begin{figure*}[ht]
\vskip 0.1in
    \begin{center}
        \includegraphics[width=\textwidth]{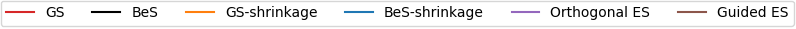}
    \end{center}%
    \begin{subfigure}[b]{0.33\textwidth}
         \includegraphics[width=\textwidth]{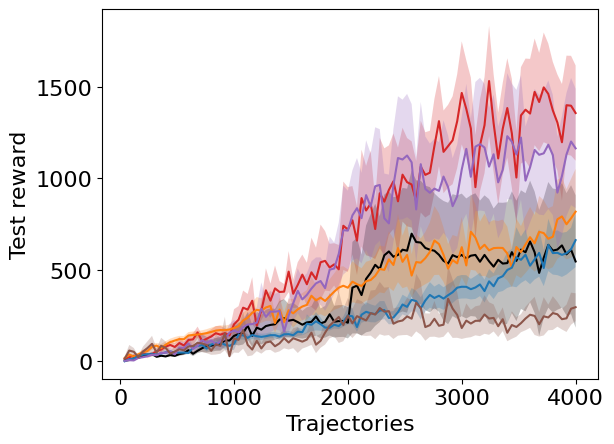}
         \caption{Hopper, $L=2$}
    \end{subfigure}
    \hfill
    \begin{subfigure}[b]{0.33\textwidth}
         \includegraphics[width=\textwidth]{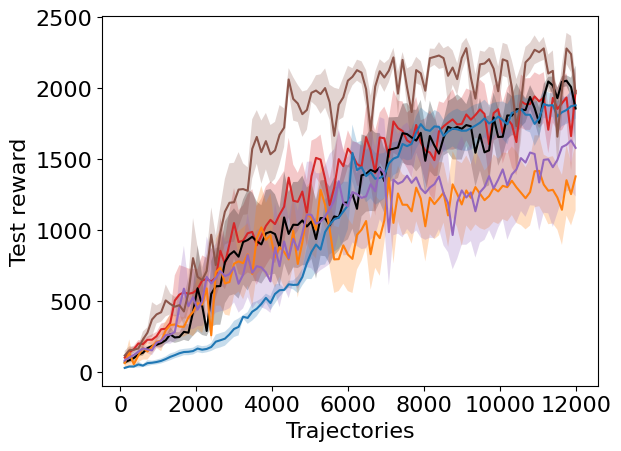}
         \caption{Hopper, $L=6$}
    \end{subfigure}
    \hfill
    \begin{subfigure}[b]{0.33\textwidth}
         \includegraphics[width=\textwidth]{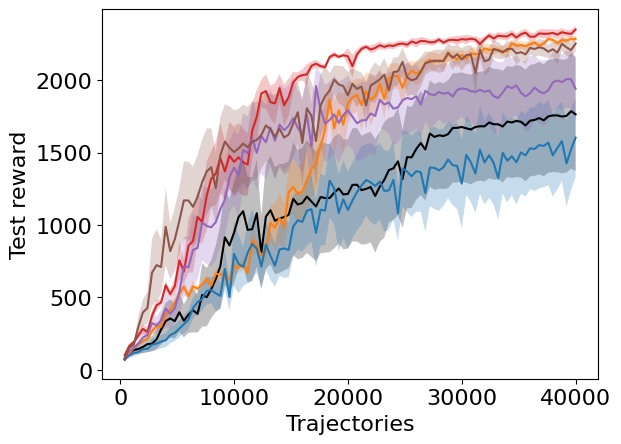}
         \caption{Hopper, $L=20$}
    \end{subfigure}%
    
    \begin{subfigure}[b]{0.33\textwidth}
         \includegraphics[width=\textwidth]{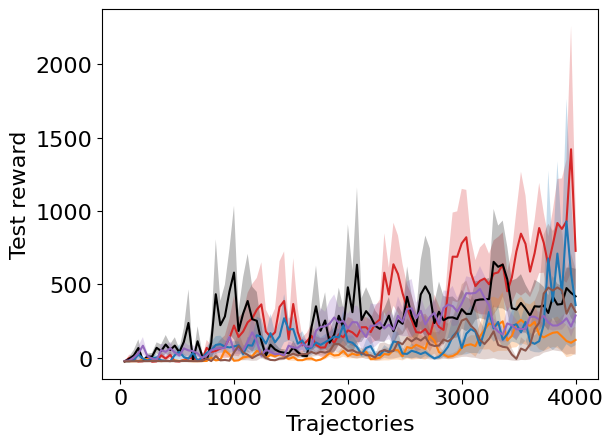}
         \caption{ML1-Push, $L=2$}
    \end{subfigure}
    \hfill
    \begin{subfigure}[b]{0.33\textwidth}
         \includegraphics[width=\textwidth]{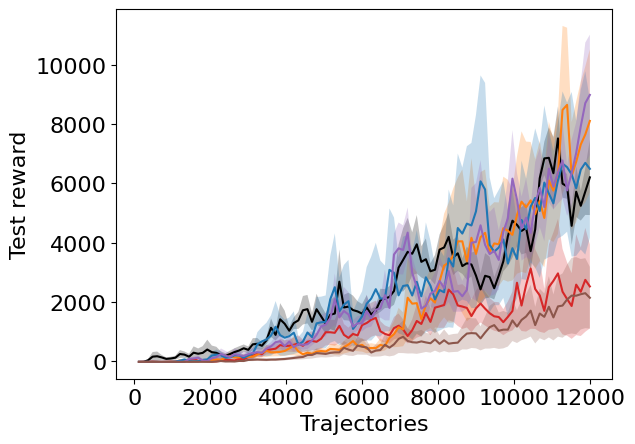}
         \caption{ML1-Push, $L=6$}
    \end{subfigure}
    \hfill
    \begin{subfigure}[b]{0.33\textwidth}
         \includegraphics[width=\textwidth]{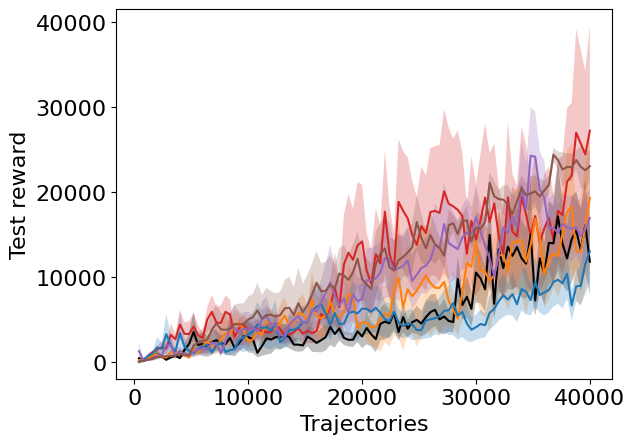}
         \caption{ML1-Push, $L=20$}
    \end{subfigure}%
\caption{RL with various $L$, additional environments}
\label{fig:rl2}
\vskip -0.1in
\end{figure*}

\subsection{Ablations}\label{sec:ablationdetails}

The experimental setup is the same as described for the main online RL experiments, in Appendix \ref{sec:rldetails}.
For brevity, we restrict to the Ant environment and set $L=20$.
The selected hyperparameters are given in the two parts of Table \ref{tab:ablation_hyper}, for i) a MLP policy instead of linear policy and ii) antithetic gradient estimator instead of forward difference gradient estimator.

\begin{table}[t]
\caption{Selected hyperparameters for Ant, $L=20$, with MLP policy (left) or antithetic gradient estimator (right).}
\label{tab:ablation_hyper}
\vskip 0.1in
\begin{center}
\begin{small}
\begin{sc}
\begin{tabular}{lccr}
\toprule
Algorithm & $c$ & Learning rate \\
\midrule
GS & $0.01$ & $0.0001$  \\ 
BeS & $0.01$ & $0.0001$  \\ 
GS-shrinkage & $0.1$ & $0.01$  \\ 
BeS-shrinkage & $0.1$ & $0.01$  \\ 
Orthogonal ES & $0.01$ & $0.0001$  \\ 
Guided ES & $0.1$ & $0.01$  \\  
\bottomrule
\end{tabular}
\end{sc}
\end{small}
\quad
\begin{small}
\begin{sc}
\begin{tabular}{lccr}
\toprule
Algorithm & $c$ & Learning rate \\
\midrule
GS & $0.01$ & $0.0001$  \\ 
BeS & $0.01$ & $0.0001$  \\ 
GS-shrinkage & $0.01$ & $0.0001$  \\ 
BeS-shrinkage & $0.01$ & $0.01$  \\ 
Orthogonal ES & $0.01$ & $0.0001$  \\ 
Guided ES & $0.1$ & $0.01$  \\ 
\bottomrule
\end{tabular}
\end{sc}
\end{small}
\end{center}
\vskip -0.1in
\end{table}

\subsection{Comparing to baselines on DFO benchmarks}\label{sec:benchmarkdets}

\mypara{Optimization and testing}
The parameters are initialized by sampling from $\nN(0,\II)$. 
There are $100$ rounds, each consisting of $10$ optimization iterations and one test step.
In more detail, every optimization iteration has the following steps:
\begin{enumerate}
    \item Sample $L$ directions $\epsilon_l$.
    \item For each direction, obtain a noisy evaluation at the parameters $\theta+c\epsilon_l$ and another at the parameters $\theta$. 
    \item Using those $2L$ numbers, compute the gradient estimate \eqref{eq:fdempirical} with $N=1$.
    \item Take a gradient descent step on $\theta$ with learning rate $\eta$.
\end{enumerate}
and at each test step, compute the objective at the current parameters.

\mypara{Hyperparameter search}
We ran this experiment for $L=\{10, 100\}$ and $N=1$, setting the noise level in Nevergrad to $0.1$.
Hyperparameters are the spacing $c$, chosen from $\{0.01, 0.1\}$, and the SGD learning rate $\eta$, chosen from $\{0.000001, 0.00001, 0.0001, 0.001, 0.01\}$. 
The values chosen are the ones that minimize the objective at the end of the $100$ rounds, averaged over $3$ randomly generated seeds different from those used in Figure \ref{fig:benchmark}.
Tables \ref{tab:sphere_hyper} -- \ref{tab:hm_hyper} show the chosen hyperparameters for each algorithm in the three objectives discussed in the main paper.

\begin{table}[t]
\caption{$c$ and learning rate for \emph{sphere}.}
\label{tab:sphere_hyper}
\vskip 0.1in
\begin{center}
\begin{small}
\begin{sc}
\begin{tabular}{lcccr}
\toprule
Algorithm & $d=10, L=1$ & $d=100, L=10$ & $d=100, L=100$ \cmnt{& $d=1000, L=10$ & $d=1000, L=100$} \\ 
\midrule
GS  & $0.1$ & $0.1$ & $0.1$ \cmnt{& $0.1$ & $0.1$} \\ 
BeS & $0.1$ & $0.1$ & $0.1$ \cmnt{& $0.1$ & $0.1$} \\ 
GS-shrinkage & $0.1$ & $0.1$ & $0.1$ \cmnt{& $0.1$ & $0.1$} \\ 
BeS-shrinkage & $0.1$ & $0.1$ & $0.1$ \cmnt{& $0.1$ & $0.1$} \\ 
Orthogonal ES & $0.1$ & $0.1$ & $0.1$ \cmnt{& $0.1$ & $0.1$} \\ 
Guided ES & $0.1$ & $0.1$ & $0.1$ \cmnt{& $0.1$ & $0.1$} \\ 
\bottomrule
\end{tabular}
\end{sc}
\end{small}
\quad
\begin{small}
\begin{sc}
\begin{tabular}{lcccr}
\toprule
Algorithm & $d=10, L=1$ & $d=100, L=10$ & $d=100, L=100$ \cmnt{& $d=1000, L=10$ & $d=1000, L=100$} \\ %
\midrule
GS & $0.001$ & $0.001$ & $0.001$ \cmnt{& $0.00001$ & $0.0001$} \\ %
BeS & $0.001$ & $0.001$ & $0.001$ \cmnt{& $0.00001$ & $0.0001$} \\ %
GS-shrinkage & $0.01$ & $0.001$ & $0.001$ \cmnt{& $0.00001$ & $0.0001$} \\ %
BeS-shrinkage & $0.01$ & $0.001$ & $0.001$ \cmnt{& $0.000001$ & $0.0001$} \\ %
Orthogonal ES & $0.001$ & $0.001$ & $0.001$ \cmnt{& $0.00001$ & $0.0001$} \\ %
Guided ES & $0.01$ & $0.001$ & $0.01$ \cmnt{& $0.00001$ & $0.0001$} \\ %
\bottomrule
\end{tabular}
\end{sc}
\end{small}
\end{center}
\vskip -0.1in
\end{table}

\begin{table}[t]
\caption{$c$ and learning rate for \emph{rosenbrock}.}
\label{tab:rosenbrock_hyper}
\vskip 0.1in
\begin{center}
\begin{small}
\begin{sc}
\begin{tabular}{lccr}
\toprule
Algorithm & $d=10, L=1$ & $d=100, L=10$ & $d=100, L=100$ \cmnt{& $d=1000, L=10$ & $d=1000, L=100$} \\ 
\midrule
GS & $0.1$ & $0.1$ & $0.1$ \cmnt{& N/A & N/A} \\ 
BeS & $0.1$ & $0.1$ & $0.1$ \cmnt{& N/A & N/A} \\ 
GS-shrinkage & $0.1$ & $0.1$ & $0.1$ \cmnt{& N/A & $0.1$} \\ 
BeS-shrinkage & $0.1$ & $0.1$ & $0.1$ \cmnt{& N/A & $0.1$} \\ 
Orthogonal ES & $0.1$ & $0.1$ & $0.1$ \cmnt{& N/A & N/A} \\ 
Guided ES & $0.1$ & $0.1$ & $0.1$ \cmnt{& N/A & $0.1$} \\ 
\bottomrule
\end{tabular}
\end{sc}
\end{small}
\quad
\begin{small}
\begin{sc}
\begin{tabular}{lccr}
\toprule
Algorithm & $d=10, L=1$ & $d=100, L=10$ & $d=100, L=100$ \cmnt{& $d=1000, L=10$ & $d=1000, L=100$} \\ %
\midrule
GS & $0.000001$ & $0.000001$ & $0.000001$ \cmnt{& N/A & N/A} \\ %
BeS & $0.000001$ & $0.000001$ & $0.000001$ \cmnt{& N/A & N/A} \\ %
GS-shrinkage & $0.000001$ & $0.000001$ & $0.000001$ \cmnt{& N/A & $0.000001$} \\ %
BeS-shrinkage & $0.000001$ & $0.000001$ & $0.000001$ \cmnt{& N/A & $0.000001$} \\ %
Orthogonal ES & $0.000001$ & $0.000001$ & $0.000001$ \cmnt{& N/A & N/A} \\ %
Guided ES & $0.000001$ & $0.000001$ & $0.00001$ \cmnt{& N/A & $0.000001$} \\ %
\bottomrule
\end{tabular}
\end{sc}
\end{small}
\end{center}
\vskip -0.1in
\end{table}

\begin{table}[t]
\caption{$c$ and learning rate for \emph{hm}.}
\label{tab:hm_hyper}
\vskip 0.1in
\begin{center}
\begin{small}
\begin{sc}
\begin{tabular}{lcccr}
\toprule
Algorithm & $d=10, L=1$ & $d=100, L=10$ & $d=100, L=100$ \cmnt{& $d=1000, L=10$ & $d=1000, L=100$} \\ 
\midrule
GS & $0.1$ & $0.1$ & $0.1$ \cmnt{& $0.1$ & $0.1$} \\ 
BeS & $0.1$ & $0.1$ & $0.1$ \cmnt{& $0.1$ & $0.1$} \\ 
GS-shrinkage & $0.1$ & $0.1$ & $0.1$ \cmnt{& $0.1$ & $0.1$} \\  
BeS-shrinkage & $0.1$ & $0.1$ & $0.1$ \cmnt{& $0.1$ & $0.1$} \\ 
Orthogonal ES & $0.1$ & $0.1$ & $0.1$ \cmnt{& $0.1$ & $0.1$} \\ 
Guided ES & $0.1$ & $0.1$ & $0.1$ \cmnt{& $0.1$ & $0.1$} \\ 
\bottomrule
\end{tabular}
\end{sc}
\end{small}
\quad
\begin{small}
\begin{sc}
\begin{tabular}{lcccr}
\toprule
Algorithm  & $d=10, L=1$ & $d=100, L=10$ & $d=100, L=10$ \cmnt{& $d=1000, L=10$ & $d=1000, L=100$} \\ %
\midrule
GS & $0.001$ & $0.0001$ & $0.001$ \cmnt{& $0.000001$ & $0.00001$} \\ %
BeS & $0.001$ & $0.0001$ & $0.001$ \cmnt{& $0.000001$ & $0.00001$} \\ %
GS-shrinkage & $0.001$ & $0.0001$ & $0.001$ \cmnt{& $0.00001$ & $0.00001$} \\ %
BeS-shrinkage & $0.001$ & $0.0001$ & $0.001$ \cmnt{& $0.000001$ & $0.0001$} \\ %
Orthogonal ES & $0.001$ & $0.0001$ & $0.001$ \cmnt{& $0.000001$ & $0.00001$} \\ %
Guided ES & $0.001$ & $0.001$ & $0.001$ \cmnt{& $0.00001$ & $0.00001$} \\ %
\bottomrule
\end{tabular}
\end{sc}
\end{small}
\end{center}
\vskip -0.1in
\end{table}


\end{document}